\gdef\@copyrightpermission{
  \begin{minipage}{0.2\columnwidth}
   \href{https://creativecommons.org/licenses/by/4.0/}{\includegraphics[width=0.90\textwidth]{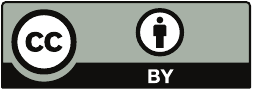}}
  \end{minipage}\hfill
  \begin{minipage}{0.8\columnwidth}
   \href{https://creativecommons.org/licenses/by/4.0/}{This work is licensed under a Creative Commons Attribution International 4.0 License.}
  \end{minipage}
  \vspace{5pt}
}
\title[Lifted Forward Planning]{Lifted Forward Planning in Relational Factored Markov Decision Processes with Concurrent Actions}
\author{Florian Andreas Marwitz \orcidlink{0000-0002-9683-5250}}
\affiliation{
  \institution{University of Hamburg}
  \city{Hamburg}
  \country{Germany}}
\email{florian.marwitz@uni-hamburg.de}
\author{Tanya Braun \orcidlink{0000-0003-0282-4284}}
\affiliation{
  \institution{University of Münster}
  \city{Münster}
  \country{Germany}}
\email{tanya.braun@uni-muenster.de}
\author{Ralf Möller \orcidlink{0000-0002-1174-3323}}
\affiliation{
  \institution{University of Hamburg}
  \city{Hamburg}
  \country{Germany}}
\email{ralf.moeller@uni-hamburg.de}
\author{Marcel Gehrke \orcidlink{0000-0001-9056-7673}}
\affiliation{
  \institution{University of Hamburg}
  \city{Hamburg}
  \country{Germany}}
\email{marcel.gehrke@uni-hamburg.de}
\begin{abstract}

When allowing concurrent actions in Markov Decision Processes, whose state and action spaces grow exponentially in the number of objects, computing a policy becomes highly inefficient, as it requires enumerating the joint of the two spaces.
For the case of indistinguishable objects, we present a first-order representation to tackle the exponential blow-up in the action and state spaces.
We propose Foreplan, an efficient relational forward planner, which uses the first-order representation allowing to compute policies in space and time polynomially in the number of objects.
Thus, Foreplan significantly increases the number of planning problems solvable in an exact manner in reasonable time, which we underscore with a theoretical analysis.
To speed up computations even further, we also introduce an approximate version of Foreplan, including guarantees on the error.
Further, we provide an empirical evaluation of both Foreplan versions, demonstrating a speedup of several orders of magnitude.
For the approximate version of Foreplan, we also empirically show that the induced error is often negligible.
\end{abstract}
\keywords{Planning; Markov Decision Process; Lifting; Probabilistic Graphical Models}
\newcommand{\BibTeX}{\rm B\kern-.05em{\sc i\kern-.025em b}\kern-.08em\TeX}
\newcommand{\customcite}[1]{\citet{#1}}
\DeclarePairedDelimiter\abs{\lvert}{\rvert}%
\DeclarePairedDelimiter\norm{\lVert}{\rVert}%
\let\oldabs\abs
\def\abs{\@ifstar{\oldabs}{\oldabs*}}
\let\oldnorm\norm
\def\norm{\@ifstar{\oldnorm}{\oldnorm*}}
\tikzset{
node/.style={circle, draw=black},
block/.style={draw=black}
}
\tikzset{every picture/.style={line width=0.6pt}}
\tikzset{
  diedge/.style = {
   semithick, ->, >={[round,sep,bend]Stealth}
  }
}
\tikzset{
  biedge/.style = {
   semithick, <->, >={[round,sep,bend]Stealth}
  }
}
\newcommand\pfs[7]{
	\node[pf, #1=of #2, xshift=-1mm, yshift=1mm](#5) {};
	\node[pf, #1=of #2, label={[label distance=1mm]#3:{#4}}](#6) {};
	\node[pf, #1=of #2, xshift=1mm, yshift=-1mm](#7) {};
}
\newcommand\pfsat[6]{
    \node[pf, xshift=-1mm, yshift=1mm](#4) at (#1) {};
	\node[pf, label={[label distance=1mm]#2:{#3}}](#5) at (#1) {};
	\node[pf, xshift=1mm, yshift=-1mm](#6) at (#5) {};
}
\newtheorem*{theorem*}{Theorem}
\newtheorem*{example*}{Example}
\begin{document}

\pagestyle{fancy}
\fancyhead{}

\maketitle

\section{Introduction}

To compute a policy for a Markov decision processes (MDPs), combinations of the state and action spaces have to be enumerated to find the best possible action for each state.
With an increasing number of (indistinguishable) objects, the state space grows exponentially, leading to drastically more combinations to be enumerated.
In case the actions also depend on the states or objects, the action space grows as well, leading to even more combinations to enumerate.
Further, if one allows concurrent actions, all possible combinations of the actions of the already enlarged action space have to be accounted for during the enumeration, yielding an exponentially-sized action space. %
Therefore, given the number of combinations that have to be enumerated, concurrency is rarely modelled.
However, consider the following example: 
A small town is haunted by an epidemic. 
To fight the epidemic, the town's mayor can impose travel bans on the town's citizens.
Certainly, the mayor can confine all citizens to their homes, stopping the epidemic. 
However, the citizens' overall welfare is important as well. 
Therefore, the mayor is interested in the \emph{best decision} of imposing travel bans (concurrent actions) w.r.t.\ confining the epidemic, while keeping the citizens' welfare above a certain threshold.
Computing this problem on a propositional level, with every citizen explicitly represented, such as with current MDP approaches, blows up the state and action space, as it requires exponential enumeration of all subsets of the population.
However, there are groups of citizens behaving identically w.r.t.\ getting sick (and well again) as well as regarding their welfare if a travel ban is imposed, i.e., these citizens are indistinguishable for the mayor.
Within these groups, it does not matter on \emph{which exact} citizen the mayor imposes a travel ban, only on \emph{how many} she imposes a travel ban is important.
Additionally, many computations over subsets of indistinguishable citizens are redundant.
Thus, we propose to drastically reduce the search and action space using a first-order representation, grouping indistinguishable citizens, and use this representation to plan group-wise by reasoning about a representative and then projecting the result to the whole group.
This paper is an extended version of our paper accepted at AAMAS 2026~\cite{aamas} with appendix including extended examples and with all (complete) proofs.

\paragraph{Contribution}
First, to be able to represent numerous indistinguishable objects and actions, we use probabilistic relational models as a representation in (factored) MDPs (fMDPs), which yields \emph{relational factored MDPs} (rfMDPs).
Second, as our key contribution, we propose \emph{Foreplan} to carry out exact planning in rfMDPs.
Foreplan uses our first-order representation to keep the action and state space only polynomial in the number of objects, which we prove by using our so-called \emph{relational cost graph}.
Foreplan remains exponential in the number of cliques $c$ and size $w$ of the largest clique in the relational cost graph, but both parameters are mostly small and independent of the domain sizes.
Therefore, Foreplan is an efficient planner w.r.t the number of objects.
We show a speedup of at least four orders of magnitude is achievable.
Last, following the approximation ideas of \customcite{approx_factored_mdps}, we can reduce the runtime even further.
We propose Approximate Foreplan, whose runtime is polynomial in $c$ unlike Foreplan.
For Approximate Foreplan, we show a speedup of at least four orders of magnitude.

\paragraph{Related Work}
\customcite{mdp} introduces MDPs, which \customcite{factored_mdp} extend to fMDPs by factorizing the transition function.
Factorizing also the value function, \customcite{approx_factored_mdps} provide two approximate algorithms for solving planning in fMDPs.
\customcite{dean2013model} cluster the state space of fMDPs to reduce the state space even further. 
\customcite{givan2003equivalence} group equivalent states based on the notion of bisimulation. 
Both approaches lack the ability to handle concurrent actions efficiently.
MDPs can be generalized to partially observable MDPs, in which the agent is uncertain in which state the environment currently is~\cite{kaelbling1998planning}.
\customcite{sanner2010symbolic} add the first-order perspective to partially observable MDPs, but do not consider concurrent actions.
\customcite{bernstein2002complexity} extend partially observable MDPs to have a set of decentralized agents.
\customcite{braun2022lifting} group indistinguishable agents.
This is similar to our approach, in which we handle sets of indistinguishable state variables.
However, \customcite{braun2022lifting} only solve the problem inefficiently via brute-force.

The idea of lifting is to carry out computations over representatives for groups of indistinguishable random variables~\cite{lve,kersting2012lifted,raedt2016statistical}. %
There are online decision making approaches adding action and utility nodes to this representation~\cite{apsel2011extended,10.1007/978-3-030-12738-1_10,10.1007/978-3-030-18305-9_33}, here, we focus on offline planning.
To carry out even more lifted computations, \customcite{crv} extends lifted probabilistic inference by a generalized counting framework, which we extend later on.
Using a first-order representation for states, actions and objects, \customcite{boutilier2001symbolic} exploit the relational structure for planning using MDPs, still without concurrent actions.
To specify factored transition models in first-order MDPs, \customcite{sanner2007approximate} introduce factored first-order MDPs using a backward search, and give only an approximate algorithm without providing error bounds. %
In contrast, we propose an exact algorithm by applying lifting.
Later, we also introduce an approximate version with error estimation.
Moreover, we prove on which models our algorithm runs efficiently.
For a survey on planning using first-order representations, we refer to \customcite{correa2024lifted}.

\paragraph{Structure}
The remainder of this paper is structured as follows:
First, we present preliminaries for (factored) MDPs, including lifting. %
Then, we introduce rfMDPs to group indistinguishable objects.
Afterwards, we propose Foreplan, which exploits these indistinguishable objects using a compact state representation, to efficiently support decision making with concurrent actions.
Further, we provide a theoretical analysis and empirical evaluation of Foreplan.

\section{Preliminaries} \label{sec:preliminaries}
In this section, we lay the foundation for rfMDPs.
We first recap (f)MDPs, which model probabilistic state changes occurring due to agents performing actions.
Furthermore, states have a reward assigned, and the task is to compute the optimal policy, i.e., which action to perform in which state.
Second, we recap lifting in probabilistic graphical models.

\subsection{(Factored) Markov Decision Processes}
In this subsection, we first define MDPs and specialize them to fMDPs by factoring the transition function $T$.

\begin{definition}[Markov Decision Process] \label{definition:mdp}
    A \emph{Markov Decision Process} is a tuple $(\mathbf{S},\mathbf{A},T,R)$ with a finite set of states $\mathbf{S}$, a finite set of actions $\mathbf{A}$, a reward function $R: \mathbf{S} \mapsto \mathbb{R}$ and a transition function $T: \mathbf{S} \times \mathbf{A} \times \mathbf{S} \to [0,1]$.
    The value $T(s,a,s')$ is the probability $P(s' \mid s,a)$ of transitioning from state $s \in \mathbf{S}$ to state $s' \in \mathbf{S}$ if action $a \in \mathbf{A}$ is performed.
\end{definition}

The rewards are additive, possibly discounted by a factor $\gamma \in [0,1)$.
An MDP is fully observable and has the first order Markov property, i.e., the probability of the next state only depends on the current state and action.
Let us have a look at a simple, yet incomplete example of an MDP.

\begin{example} \label{ex:sick}
Suppose we have the states \emph{healthy} and \emph{sick}. Our agent has two possible actions: \emph{Travelling} or \emph{staying at home}.
When the agent is in state sick and travels, she stays sick with a probability of $0.9$.
The agent obtains a reward of $-1$ if sick and $1$ if healthy.
\end{example}

Planning in MDPs refers to calculating an optimal policy, which is a mapping from each state to an action to perform for the agent. %
To compute such a policy, we first define the \emph{utility} of a state:
\begin{definition}[Bellman Equation~\cite{mdp}] \label{def:bellman_equation}
The utility of a state $s$ is given by
\begin{equation}
    U(s) = R(s) + \gamma \max_{a \in \mathbf{A}} \sum_{s' \in \mathbf{S}} P(s' \mid s,a) \cdot U(s').
\end{equation}
\end{definition}

To find the utility of a state algorithmically, we determine a \emph{value} function $V$ satisfying the Bellman equation.
The value function induces a policy by selecting the action that yields the maximum expected value.
For computing a value function, we can use a linear programming formulation~\cite{mdp_lp_orig,mdp_lp}:
\begin{equation} \label{eq:lp}
\begin{array}{ll@{}}
\text{Variables:} & V(s) \quad \forall s \in \mathbf{S} \ ;\\
\text{Minimize:} & \displaystyle\sum_{s \in \mathbf{S}} \alpha(s) V(s) \ ;\\
\text{Subject to:} & \forall s \in \mathbf{S},\ \forall a \in \mathbf{A}: \\
& V(s) \geq R(s) + \gamma \displaystyle\sum_{s' \in \mathbf{S}} P(s' \mid s,a) \cdot V(s'), \\
\end{array}
\end{equation}
where the coefficients $\alpha(s)$ are arbitrary positive numbers, e.g., a uniform distribution over all states~\cite{mdp_lp}.
Planning in MDPs can be solved in polynomial time w.r.t to the state space size~\cite{papadimitriou1987complexity}. But what if the state space becomes very large, e.g., exponential in the number of objects?
For retaining an efficient transition model, fMDPs make use of state variables for the objects.
The state space is then spanned by the state variables. %
For simplicity, all state variables are Boolean, but can be easily extended to non-Boolean.
\begin{definition}[Factored MDP]
A \emph{factored MDP} is a tuple $(\mathbf{S}$, $\mathbf{A}$, $T$, $R)$, where the state space is the cross product $\mathbf{S} = \{0,1\}^m$ of the sub-spaces corresponding´to the state variables $S_1, \dots, S_m$ with Boolean range. The transition function $T$ is, for each action, factored:

\begin{equation}
P(S' \mid S,a) = \prod_{i=1}^m P(S_i' \mid Pa(S_i'),a),
\end{equation}
where $Pa(S_i') \subseteq \mathbf{S}$ denotes the set of \emph{parents} of $S_i'$ in the previous state, $S$ the old state, $S'$ the new state and $S_i'$ the $i$-th state variables in the respective state. For a given state $s \in \mathbf{S}$, we denote with $s_i$ the assignment of state variable $S_i$ in state $s$.
\end{definition}

For large $m$, the state space explodes.
But when these objects are indistinguishable, we can use parameterized graphical models to encode state spaces and transition models in a compact way.

\subsection{Parameterized Graphical Models}
Having \emph{indistinguishable} random variables leads to redundant computations.
We can tackle redundant computations by parameterizing our probabilistic model and using representatives of groups of indistinguishable variables, so that inference in the probabilistic model becomes tractable w.r.t. domain sizes using representatives during calculations~\cite{niepert2014tractability}.
We first define \emph{parameterized random variables} (PRVs) to group same behaving variables:
\begin{definition}[Parameterized Random Variable~\cite{parfactor_definitions}]
	Let $\mathbf{W}$ be a set of random variable names, $\mathbf{L}$ a set of logical variable (logvar) names, and $\mathbf{D}$ a set of constants (universe).
	All sets are finite.
	Each logvar $L$ has a domain $\mathcal{D}(L) \subseteq \mathbf{D}$.
	A \emph{constraint} $C$ is a tuple $(\mathcal{X}, C_{\mathcal{X}})$ of a sequence of logvars $\mathcal{X} = (X_1, \dots, X_n)$ and a set $C_{\mathcal{X}} \subseteq \times_{i = 1}^n\mathcal{D}(X_i)$.
	The symbol $\top$ for $C$ marks that no restrictions apply, i.e., $C_{\mathcal{X}} = \times_{i = 1}^n\mathcal{D}(X_i)$.
	A \emph{PRV} $B(L_1, \dots, L_n), n \geq 0,$ is a syntactical construct of a random variable $B \in \mathbf{W}$ possibly combined with logvars $L_1, \dots, L_n \in \mathbf{L}$. %
	If $n = 0$, the PRV is parameterless and constitutes a propositional random variable (RV).
	The term $\mathcal{R}(B)$ denotes the possible values (range) of a PRV $B$. 
	An \emph{event} $B = b$ denotes the occurrence of PRV $B$ with range value $b \in \mathcal{R}(B)$.
\end{definition}
We create PRVs for the health status of each person in the town:
\begin{example} \label{example:prvs}
Let $\mathbf{W}=\{Sick,Epidemic\}$, $\mathbf{L}=\{M\}$, $\mathcal{D}(M)=\mathbf{D}=\{a,b,c,d,e,f,g,h\}$ with Boolean-valued PRVs $Sick(M)$ and $Epidemic$.
$Epidemic$ is a parameterless PRV.
\end{example}

To relate PRVs, we use \emph{parameterized factors}:
\begin{definition}[Parfactor model~\cite{parfactor_definitions}]
    Let $\Phi$ be a set of factor names.
	We denote a \emph{parameterized factor} (\emph{parfactor}) $g$ by $\phi(\mathcal{B})_{| C}$ with $\mathcal{B} = (B_1, \dots, B_n)$ a sequence of PRVs, $\phi : \times_{i = 1}^n \mathcal{R}(B_i) \to \mathbb{R}^+$ a \emph{potential function} with name $\phi \in \Phi$, and $C$ a constraint on the logvars of $\mathcal{B}$.
	A set of parfactors forms a \emph{model} $G := \{g_i\}_{i=1}^n$.
	With $Z$ as normalizing constant, $G$ represents the full joint distribution $P_G = \frac{1}{Z} \prod_{f \in gr(G)} f$, with $gr(G)$ referring to the groundings of $G$ w.r.t. given constraints.
    A grounding is the instantiation of each logvar in each parfactor with an allowed constant.
\end{definition}

Continuing Example~\ref{example:prvs}, let us define the parfactor model:

\begin{example}
Let $\forall m \in M: \phi(Sick(m),Epidemic)_\top$ be the parfactor with potential $\phi$ defining the probability to be sick for all persons from $\mathbf{D}$, given there is an epidemic (or not).
The grounded model then consists of the eight factors $\phi(Sick(a),Epidemic)$, \dots, $\phi(Sick(h),Epidemic)$ compared to one parfactor in the lifted model.
Table~\ref{table:potential} shows the potential function.
\end{example}

\begin{table}[tbp]
	\centering
    \caption{Potential function giving the probability of a person being sick given whether there is an epidemic.}
	\label{table:potential}
	\begin{tabular}{cc|c}
		$Epidemic$ & $Sick(M)$ \\ 
		\hline
		0 & 0 & 0.9 \\
		0 & 1 & 0.1 \\
		1 & 0 & 0.3 \\
		1 & 1 & 0.7 \\
	\end{tabular}
\end{table}

Next, we integrate parfactors into fMDPs to achieve a compact representation of state and action spaces.

\section{Relational Factored MDPs}
Now, we present the first-order representation for the state and the action space as well as the transition function, which allows to efficiently represent indistinguishable objects as well as concurrent action, which then in turn can be exploited by an appropriate algorithm.
To that end, we present Foreplan in the next section.

We define rfMDPs based on fMDPs, but include a parfactor model inside the state and action space and transition function.
That is, the set of state variables can now contain PRVs, which are then used in the transition model and the reward function.
Also, we have \emph{action PRVs}, whose value is chosen by the agent.
Basically, we keep the same semantics as in Definition~\ref{definition:mdp} and only change the representation to exploit indistinguishable variables.
In the following definition we use the term \emph{interpretation} of a set for a truth-value assignment to each element in the set.

\begin{definition}[Relational Factored MDPs] \label{definition:rfMDP}
A \emph{relational factored MDP} is a tuple $(\mathbf{D}, \mathbf{L}, \mathbf{B}, \mathbf{A}, G, \mathbf{R})$.
The set $\mathbf{D}$ is a set of constants and the set $\mathbf{L}$ is a set of logvars over $\mathbf{D}$.
The set $\mathbf{B}$ is a set of PRVs defined over $\mathbf{L}$.
The set of possible interpretations $\mathcal{I}_\mathbf{B}$ for the groundings of the set $\mathbf{B}$ defines the state space, i.e., all possible assignments.
The set $\mathbf{A}$ is a set of action PRVs.
A parfactor model $G$ over $\mathbf{A}$ and $\mathbf{B}$ represents the transition function $T: \mathcal{I}_\mathbf{B} \times \mathcal{I}_\mathbf{A} \times \mathcal{I}_\mathbf{B} \to \mathbb{R}_0^+$, with the set $\mathcal{I}_\mathbf{A}$ of possible interpretations of the groundings of $\mathbf{A}$, and specifies the transition probability given an action and a previous state.
The set $\mathbf{R}$ contains \emph{parameterized} local reward functions $R_i : \bigtimes_j \mathcal{R}(B_{i,j}) \to \mathbb{R}$, defined over all PRVs $B_{i,j}$, which contribute to the local reward $R_i$.
The reward function $R$ is decomposed as a sum operation over $\mathbf{R}$.
\end{definition}

Let us generalize Example~\ref{ex:sick} to an arbitrary number of persons behaving in the same way as the agent in Example~\ref{ex:sick}:
\begin{example}[Epidemic] \label{ex:epidemic}
There is a set $\mathbf{D}$ of persons living in a small town, represented by the logvar $M$ with $\mathcal{D}(M)=\mathbf{D}$.
Each person can be sick or healthy, leading to the PRV $Sick(M)$.
The government gets a reward of $1$ for each healthy person and a reward of $-1$ for each sick person. To combat an epidemic, the government can impose travel bans on persons, resulting in the action $Restrict(M)$ to impose a travel ban on a subset of persons.
Moreover, each person can travel or not, leading to the PRV $Travel(M)$. The government gets a reward of $2$ for each person travelling.
The PRV $Epidemic$ is influenced by the number of people travelling and influences the healthiness of each person.
Figure~\ref{fig:epidemic_grounded} shows the grounded transition model and figure~\ref{fig:epidemic_lifted} shows the lifted transition model for this example.
\end{example}

\begin{figure}[tbp]
    \centering
    \scalebox{0.7}{
    \begin{tikzpicture}[baseline=(current bounding box.center)]
        \node[node] (ta) {$Tr(a)$};
        \node[node] (sa) [right=of ta] {$Si(a)$};
        \node[node] (e) [right= of sa] {$Epi$};
        \node[node] (sb) [right=of e] {$Si(b)$};
        \node[node] (tb) [right=of sb] {$Tr(b)$};
    
        \node[block] (fta) [below=of ta] {$f_1$};
        \node[block] (fsa) [below=of sa] {$f_2$};
        \node[block] (fsb) [below=of sb] {$\tilde{f}_2$};
        \node[block] (ftb) [below=of tb] {$\tilde{f}_1$};
    
        \node[node] (ra) [left=of fta] {$Re(a)$};
        \node[node] (rb) [right=of ftb] {$Re(b)$};
    
        \node[node] (sap) [below=of fsa] {$Si'(a)$};
        \node[node] (tap) [below=of fta] {$Tr'(a)$};
        \node[node] (ep) [right=of sap] {$Epi'$};
        \node[node] (sbp) [below=of fsb] {$Si'(b)$};
        \node[node] (tbp) [below=of ftb] {$Tr'(b)$};
    
        \node[block] (fe) at ($(e)!0.5!(ep)$) {$f_3$};
    
        \draw[-] (ta) -- (fta);
        \draw[-] (ra) -- (fta);
        \draw[-] (tap) -- (fta);
        \draw[-] (tb) -- (ftb);
        \draw[-] (rb) -- (ftb);
        \draw[-] (tbp) -- (ftb);
        \draw[-] (sa) -- (fsa);
        \draw[-] (e) -- (fsa);
        \draw[-] (sap) -- (fsa);
        \draw[-] (sb) -- (fsb);
        \draw[-] (e) -- (fsb);
        \draw[-] (sbp) -- (fsb);
        \draw[-] (ta) -- (fe);
        \draw[-] (tb) -- (fe);
        \draw[-] (ep) -- (fe);
    \end{tikzpicture}
    }
    \caption{Grounded transition model for Example~\ref{ex:epidemic}. We assume that the population of the town consists of two citizens $a$ and $b$. For abbreviation, we use only the first letter(s) for each symbol. The factors $f_1$ and $\tilde{f}_1$ as well as $f_2$ and $\tilde{f}_2$ could be combined into a parfactor. Figure~\ref{fig:epidemic_lifted} shows the lifted version.}
    \label{fig:epidemic_grounded}
    \Description{Grounded graphical model for the epidemic example: The factor f_1 links Travel(a) and Reestrict(a) to Travel'(a), analogously \tilde{f}_1 for b, f_2 links Sick(a) and Epidemic to Sick'(a), analogously \tilde{f}_2 for b, and f_3 links Travel(a) and Travel(b) to Epi'.}
\end{figure}

\begin{figure}[tb]
    \centering
    \scalebox{0.7}{
        \begin{tikzpicture}[baseline=(current bounding box.center),pf/.style={draw, rectangle, fill=gray}]
            \node[node] (ta) {$Tr(M)$};
            \node[node] (sa) [right=of ta] {$Si(M)$};
            \node[node] (e) [right= of sa] {$Epi$};

            \pfs{below}{ta}{40}{$f_1$}{f1i}{fta}{f1o};
            \pfs{below}{sa}{130}{$f_2$}{f2i}{fsa}{f2o};
        
            \node[node] (ra) [left=of fta] {$Re(M)$};
        
            \node[node] (sap) [below=of fsa] {$Si'(M)$};
            \node[node] (tap) [below=of fta] {$Tr'(M)$};
            \node[node] (ep) [right=of sap] {$Epi'$};
        
            \pfsat{$(e)!0.5!(ep)$}{40}{$f_3$}{f3i}{fe}{f3o};

            \draw[-] (ta) -- (fta);
            \draw[-] (ra) -- (fta);
            \draw[-] (tap) -- (fta);
            \draw[-] (sa) -- (fsa);
            \draw[-] (e) -- (fsa);
            \draw[-] (sap) -- (fsa);
            \draw[-] (ta) -- (f3i);
            \draw[-] (ep) -- (fe);
        \end{tikzpicture}
    }
    \caption{Lifted representation of the transition model for Example~\ref{ex:epidemic}. We abbreviate by using only the first letter(s) for each symbol. Figure~\ref{fig:epidemic_grounded} shows the lifted representation.}
    \label{fig:epidemic_lifted}
    \Description{Parameterized graphical model for the epidemic example: The factor f_1 links Travel(M) and Restrict(M) to Travel'(M), f_2 links Sick(M) and Epidemic to Sick'(M) and f_3 links Travel(M) to Epidemic'.}
\end{figure}

Since an action can be applied to each person concurrently, the amount of possible actions is exponential due to the power set, i.e., all possible combinations for all persons.
Foreplan avaoids the exponential blow-up by exploiting the indistinguishability of action PRVs.

With Definition~\ref{definition:rfMDP}, we require the rewards to be represented as a sum of local reward functions, and not as some arbitrary function.
Actually, this is too restrictive:
It is sufficient that an operation running over same behaving variables has a hyperoperation repeating the operation multiple times.
For a sum, the hyperoperation is multiplication.
Also, multiple terms of such operations can be linked together in any way.
In Appendix~\ref{appendix:hyperoperations}, we give examples for hyperoperations.
For simplicity and the proofs, we use the definition as it is throughout the paper.

We exploit the symmetries in Definition~\ref{definition:rfMDP} with Foreplan in the next section.
In the remainder of this section, we explain what we mean by \emph{action PRVs} and by \emph{parameterized} local reward functions.

\subsection{Parameterizing Actions}
In our epidemic example, the mayor, representing the government, can impose travel bans on all parts of the town's population.
We extend the action definition in this subsection to account for groups of objects.
Having groups, we circumvent enumerating of all possible subsets and model the example action of imposing travel bans on a subset of the population efficiently.

\begin{definition}[Action PRV] \label{def:parameterized_action}
An action PRV $A$ is a Boolean-valued PRV.
A concrete action is a set of events, i.e., each grounding of $A$ receives an assignment $a \in \mathcal{R}(A)$.
\end{definition}

Action PRVs allow for a more general action setting.
When writing action, we refer to Definition~\ref{def:parameterized_action}. %
In our example, the mayor can restrict multiple persons from travelling at once:

\begin{example}[Action PRV] \label{ex:parameterized_action}
The action PRV $Restrict(M)$ models the possible travel bans on the population of the town.
For a concrete action, the mayor has to specify which persons are restricted from travelling.
When restricting the persons $a$, $b$ and $f$, the mayor specifies $Restrict(M)=true$ with constraint $(M, \{a,b,e\})$ and $Restrict(M) = false$ with constraint $(M, \{c,d,f,g,h\})$.
\end{example}

However, when the persons $a$ to $h$ are indistinguishable, it is irrelevant which selection is used for the constraints, only the amounts are relevant.
We describe the impact of parameterization on the rewards next.

\subsection{Parameterized Local Reward Functions}
The reward function in fMDPs maps from the (joint) state to the reward of the state.
For evaluating the reward function, we thus have to construct the joint state and cannot exploit the factorization,
breaking our aim of compact representation.
To further use our compact representation, we introduce a decomposable reward function:
For simplicity, we assume that the reward function is factored as $R = \sum_i R_i$, with local reward functions $R_i$ with scope restricted to a subset of the state variables.
Other operations are possible too, as long as they have a hyperoperation, as already mentioned.
As we have indistinguishable state variables, we reduce redundant computations by using representatives in the reward functions analogously to a parfactor, but using a sum instead of a product:

\begin{definition} %
A local reward function $R_i:\bigtimes_j \mathcal{R}(B_{i,j}) \to \mathbb{R}$ is defined over all PRVs $B_{i,j}$, which contribute to the local reward $R_i$.
The semantics of a single parameterized local reward function $R_i$ is defined as the sum $\sum_z R_i(z)$ over the interpretations $z \in \bigtimes_j \mathcal{R}(B_{i,j})$ in the current state of all groundings of $R_i$.
\end{definition}

In other words, a parameterized local reward function serves as a placeholder for the set of local reward functions, obtained by replacing all logvars by their possible instantiations.
We illustrate parameterized reward functions in our epidemic example:
\begin{example} \label{ex:lifted_reward} %
The parameterized local reward functions for Example~\ref{ex:epidemic} are $R_1(Sick(M))$, evaluating to $-1$ ($1$) for each person (not) being sick, and $R_2(Travel(M))$, evaluating to $2$ for each person travelling.
If five persons are sick, three are not sick and four people are travelling, the total reward is $-5 + 3 + 8 = 6$.
\end{example}

With rfMDPs, we can efficiently represent the action and state space as well as the transition function for numerous indistinguishable objects.
Further, we have a compact representation for concurrent actions.
In the next section, we propose Foreplan, our main contribution, which uses our representation to solve planning in rfMDPs in time polynomial in the number of objects.

\section{Foreplan: Efficient Forward Planning}
In this section, we propose Foreplan, our exact \underline{fo}rward \underline{re}lational \underline{plan}ner for rfMDPs.
The input to Foreplan is an rfMDP.
The output is a value function, which induces a policy.
Foreplan computes the value function using a compact state representation for the rfMDP to exploit the indistinguishability of the objects and then running a linear program based on the state representation to calculate the value function.

The first-order representation in rfMDP contains dependencies between PRVs.
For using the representation in computation, we first need to recognize the dependencies and find a suitable representation of them to compute the value function efficiently.
We first describe how Foreplan recognizes the dependencies and identifies a suitable representation.
Afterwards, we outline how Foreplan uses the identified representation to compute a value function.

\subsection{Obtaining a Compact State Representation} \label{sec:state_representation}
Foreplan needs to encode the current state compactly to efficiently reason about indistinguishable variables.
Thus, in this subsection, we describe how Foreplan treats indistinguishable objects for computations using rfMDPs.
Foreplan does not need to keep track of objects that could be differentiated by their history.
Rather, with each action and new time step, the history is swept away and the objects remain indistinguishable because of the first-order Markov assumption.
To compactly represent the state, the basic idea is to \emph{count} the objects for which the PRVs are true or false, using the idea of Counting Random Variables (CRVs)~\cite{crv}.
When evaluating a parfactor, the joint assignment to the PRVs is needed, rather than individual counts for each PRV.
It is sufficient to count the number of occurrences of each possible truth-value assignment to the groundings of the \emph{input} PRVs of a parfactor in a histogram.
The \emph{input} PRVs of a parfactor are the ones representing the current state.
Counting only the input PRVs is sufficient, because all possible next states are iterated separately later. %
But some parfactors may be defined over PRVs defined over different logvars, like $f_3$ in Figure~\ref{fig:epidemic_lifted} is defined over $Travel(M)$ and $Epidemic$.
Given the counts for $Travel(M)$ and $Epidemic$ separately is enough, since these PRVs do not depend on each other.
But how does Foreplan recognize which PRVs depend on which PRVs?
For answering this question, we propose the \emph{relational cost graph} to identify the dependencies between PRVs and identify the groups that need to be counted jointly.
Focusing only on the counts for the groups enables Foreplan to use a much simpler state space representation, namely the set of possible histograms, compared to the grounded state space representation, which is the Cartesian product over the domains of all state variables.
We now describe how to count the assignments in more detail.

Counting the assignments of each PRV separately is insufficient, as PRVs can be defined over the same logvars and thus interfere with each other.
However, the parfactors can be evaluated separately since, in Equation~\ref{eq:lp}, we have the full current and next state available.
Thus, it is sufficient to count PRVs together if they share a logvar and occur together in a parfactor.
To obtain the representation and later on quantify its complexity, we define the relational cost graph:
\begin{definition}[Relational Cost Graph] \label{def:relational_cost_graph}
The \emph{relational cost graph} of an rfMDP has a vertex for each PRV in the current state.
Two vertices are connected by an edge if and only if the PRVs associated with these two vertices share a logvar and occur together in a parfactor as input PRVs or a parameterized local reward function.
We denote the number of (maximal) cliques by $c$ and the size of the largest clique by $w$.
\end{definition}

We give a more formal definition of the relational cost graph in Appendix~\ref{appendix:rcg_formal_definition}.
Let us take a look at the relational cost graph of Example~\ref{ex:epidemic}.
For a more complex example, we refer to Appendix~\ref{appendix:rcg-examples}.
\begin{example} \label{ex:epidemic_relational_cost_graph}
The relational cost graph for Example~\ref{ex:epidemic} consists of three isolated vertices corresponding to $Sick(M)$, $Travel(M)$ and $Epidemic$. The first two do not occur together in a parfactor or local reward function and both do not share a logvar with the last one.
Figure~\ref{fig:epidemic_rcg} shows the visual representation of the relational cost graph.
\end{example}

\begin{figure}[tbp]
    \centering
    \scalebox{0.7}{
    \begin{tikzpicture}[baseline=(current bounding box.center),
                        nodesize/.style={minimum size=1.6cm}]
        \node[node, nodesize] (s) {$Si(M)$};
        \node[node, nodesize] (t) [right= of s] {$Tr(M)$};
        \node[node, nodesize] (e) [right= of t] {$Epi$};
    
    \end{tikzpicture}
    }
    \caption{Visual representation of the relational cost graph in Example~\ref{ex:epidemic}. We abbreviate using the first letter(s).}
    \label{fig:epidemic_rcg}
    \Description{The relational cost graph for the epidemic example consists of three isolated vertices corresponding to Sick(M), Travel(M) and Epidemic.}
\end{figure}

The key insight now is that (maximal) cliques in the relational cost graph correspond to sets of PRVs that Foreplan needs to count together as they interfere with each other.
For one logvar per PRV, basic CRVs~\cite{crv} already return the result.
To lift the limitation of one logvar per PRV, we extend the definition by \customcite{crv}:

\begin{definition}[Extended Counting Random Variable] \label{def:extended_crv}
    A counting formula $\gamma = \#_C[B_1,\dots,B_k]$ is defined over PRVs $B_i$ with a constraint $C = (\mathcal{L}, C_{\mathcal{L}})$ over the logvars $\mathcal{L}$ of the PRVs $B_i$.
    The counting formula represents a \emph{counting random variable} (CRV) whose range is the set of possible histograms that distribute $n = \abs{C_{\mathcal{L}}}$ elements into $\prod_{i=1}^k \abs{\mathcal{R}(B_i)}$ buckets.
    The state of $\gamma$ is the histogram function $h = \{(r_i,n_i)\}_{i=1}^r$ stating for each $r_i \in \times_{i=1}^k \mathcal{R}(B_i)$ the number $n_i$ of tuples $(B_i)_i$ whose state is $r_i$.
\end{definition}

If no restrictions apply, we omit $C$, and $n = \prod_{L \in \mathbf{L}} \abs{\mathcal{D}(L)}$, where $\mathbf{L}$ is the set of logvars of the PRVs $B_i$.
We do not define the operations to manipulate CRVs as we do not need them in this paper.
The CRV corresponding to a clique gives us the number of occurrences for each possible instantiation of the PRVs in that clique.
For illustrative purposes, we give a small example for a CRV using an additional PRV $Friends(M,Y)$, with $M$ and $Y$ having the same domain:
\begin{example} \label{ex:crv}
To have a PRV with two logvars, assume we have three PRVs, $Travel(M)$, $Friends(M,Y)$ and $Sick(Y)$ with $\abs{\mathcal{D(M)}}=\abs{\mathcal{D}(Y)}=8$.
A possible state for the CRV $\#[Travel(M)$, $Friends(M$, $Y)$, $Sick(Y)]$ is
$\{(ttt,6)$, $(ttf,30)$, $(tft,4)$, $(tff,0)$, $(ftt,6)$, $(ftf,0)$, $(fft,0)$, $(fff,18)\}$ and we have $n = 64$.
The first histogram entry shows that, in this state, there are six tuples for which $Travel(m)=t$, $Friends(m,y)=t$, and $Sick(y)=t$ holds.
\end{example}

Example~\ref{ex:crv} illustrates that we need only eight buckets in a histogram regardless of the domain size of $M$ and $Y$.
The number of buckets depends on the range values of the PRVs and not on the domain sizes.
We formalize the state representation, which is a set of CRVs per (maximal) clique in the relational cost graph: %
\begin{definition}[State Representation] \label{def:state-representation}
For counting the corresponding PRVs in a clique, we create one CRV $S_i$ for each (maximal) clique in the relational cost graph.
For a single propositional RV in the relational cost graph, we use the RV for $S_i$ instead of a CRV.
A \emph{state} assigns a value to each $S_i$.
The resulting \emph{state space} is the set of possible states.
We denote the \emph{state space} by the tuple $(S_i)_i$, which contains one CRV (or RV) per clique.
\end{definition}

We give the representation of the state space for Example~\ref{ex:epidemic}:
\begin{example} \label{ex:state_representation}
As the vertices are not connected in the relational cost graph, so the state representation is $(\#[Sick(M)]$, $\#[Travel(M)]$, $Epidemic)$.
\end{example}

The relational cost graph in Example~\ref{ex:epidemic_relational_cost_graph} tells us that we can count the number of healthy persons separately from the number of travelling persons.
In particular, Foreplan does not store which sick persons are travelling, since this information is irrelevant.
We prove that our state representation is correct, i.e., retains the same semantics, namely exactly representing $\mathbf{S}$ of a ground MDP:
\begin{theorem} \label{theorem:state-representation-correctness}
    The representation in Definition~\ref{def:state-representation} is correct.
\end{theorem}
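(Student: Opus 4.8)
The plan is to establish a bijection between the original state space $\mathbf{S} = \mathcal{I}_{\mathbf{X}}$ (the interpretations of all groundings of the PRVs in the current state) and the state space of Definition~\ref{def:state-representation}, modulo the symmetry that makes objects indistinguishable. Strictly speaking, the claim cannot be that the representation recovers every element of $\mathbf{S}$ individually — it deliberately collapses indistinguishable ground states — so "correct" must mean: (i) every element of $\mathbf{S}$ maps to exactly one state in the representation, (ii) two ground states map to the same representation state if and only if they are related by a domain permutation (hence indistinguishable for all transition and reward computations), and (iii) the representation state carries exactly the information needed to evaluate $T$ and $R$. I would prove these three points in turn.

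**First**, I would define the mapping explicitly. Given a ground state $s \in \mathbf{S}$, which assigns a Boolean value to each grounding of each PRV, and given a maximal clique $K$ in the relational cost graph with PRVs $B_1, \dots, B_k$ sharing logvars with joint constraint $C$, define $h_K(s)$ to be the histogram that, for each joint range value $r \in \bigtimes_i \mathrm{range}(B_i)$, counts the number of tuples in $C_{\mathcal{L}}$ on which $(B_1, \dots, B_k)$ jointly take value $r$. For an isolated propositional RV, the ``histogram'' is just its Boolean value. This is exactly the state of the CRV $S_K$ in Definition~\ref{def:extended_crv}, so the map $s \mapsto (h_K(s))_K$ lands in the claimed state space. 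I would then check well-definedness of the inverse: given a tuple of histograms, one per clique, the histograms over distinct cliques constrain disjoint (or compatibly overlapping, via shared logvars but — crucially — \emph{not} shared within a single parfactor) sets of ground assignments, so up to a permutation of $\mathbf{D}$ there is a ground state realizing them; conversely any two such realizations differ only by such a permutation.

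**Second and most delicate**, I would argue that no information relevant to $T$ or $R$ is lost. This is where the relational cost graph does the real work: by Definition~\ref{def:relational_cost_graph}, two PRVs are adjacent precisely when they co-occur in some parfactor or parameterized local reward function \emph{and} share a logvar. Hence every parfactor $\phi(\mathcal{B})_{|C}$ and every local reward function has its input PRVs contained in a single clique (PRVs without a shared logvar are counted independently, which is sound because their grounded instances range over disjoint index sets). Therefore the per-clique histograms determine, for each parfactor, how many ground instances evaluate to each combination of input values — and that multiset is all that a sum-of-logs of the parfactor potentials (for $T$) or a sum of local rewards (for $R$) can depend on, since both are symmetric in the groundings. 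I would spell this out as a lemma: for any function on $\mathbf{S}$ that is a sum or product over groundings of terms each depending only on PRVs within one clique, the value factors through the map $s \mapsto (h_K(s))_K$.

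**The main obstacle** I anticipate is the overlap bookkeeping when two cliques share a logvar but not a parfactor: one must verify that the separate histograms are jointly realizable and that the realization is unique up to permutation — i.e.\ that splitting the count across cliques does not lose correlations that any $T$ or $R$ term could see. The resolution is precisely that, by construction of the cost graph, no single parfactor or reward term spans two cliques, so no computation ever needs the joint distribution of PRVs lying in different cliques; a counting/combinatorial argument (choosing, for each logvar tuple, a value consistent with the relevant per-clique histograms) then shows existence, and the indistinguishability of objects gives uniqueness up to $\mathrm{Sym}(\mathbf{D})$. Once that lemma is in place, the theorem follows: the representation is sound (every ground state has an image), complete (every representation state is realized), and lossless for the quantities the Bellman / linear-program formulation of Equation~\ref{eq:lp} actually evaluates.
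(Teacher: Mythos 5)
Your overall route coincides with the paper's: map a ground state to one histogram per maximal clique by counting, and recover ground states from a histogram tuple by instantiating groundings, using that PRVs placed in different CRVs do not share parameters within any single parfactor or reward term. Your additional ``factoring'' lemma -- that any function formed as a sum or product over groundings of terms whose scope lies inside one clique depends on the ground state only through the per-clique counts -- is precisely the (largely implicit) reason the paper's representation loses nothing relevant to $T$ and $R$, so on that part you are, if anything, more explicit than the paper.

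There is, however, one genuine flaw: your point (ii), that two ground states have the same representation \emph{if and only if} they are related by a permutation of $\mathbf{D}$, and the corresponding claim that realizations of a histogram tuple are unique up to $\mathrm{Sym}(\mathbf{D})$, are false, and you partially lean on them (``hence indistinguishable for all transition and reward computations''). In the running example with two persons, the ground state $Sick(a), \lnot Sick(b), Travel(a), \lnot Travel(b)$ and the ground state $Sick(a), \lnot Sick(b), \lnot Travel(a), Travel(b)$ induce identical histograms for $\#[Sick(M)]$ and $\#[Travel(M)]$, yet no domain permutation maps one to the other, because the per-person correlation between $Sick$ and $Travel$ differs. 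The representation deliberately forgets exactly this cross-clique correlation; that this is harmless follows not from permutation equivalence but from your point (iii): $Sick(M)$ and $Travel(M)$ never co-occur in a parfactor or reward function, so no term of $T$ or $R$ ever reads the forgotten correlation. The repair is to drop the ``iff''/uniqueness-up-to-$\mathrm{Sym}(\mathbf{D})$ claim entirely and let the argument rest on the factoring lemma alone (plus the observation that the aggregated transition probability into a next \emph{representation} class is then independent of the chosen ground representative, which is what Equation~\ref{eq:lp} needs) -- which is in substance what the paper's terse two-direction proof does, since its reverse direction only asserts that \emph{some} grounding realizes the histograms, not a unique one up to permutation.
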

\begin{proof}[Proof Sketch]
    Given groundings for the state PRVs, we derive the histograms for the CRVs by counting the assignments for each parfactor.
    Given a representation as in Definition~\ref{def:state-representation}, we reconstruct, for each parfactor, the groundings of the PRVs by using the counts from the CRVs to instantiate the respective parfactor.
    We provide a full proof in Appendix~\ref{appendix:state-representation}.
\end{proof}

To advance through an action to the next state, the action now has to use the same state representation, i.e., the action is specified on the counts for all PRVs of the current state for all parfactors the action is mentioned in:

\begin{example} \label{ex:action_with_state}
The action $Restrict(M)$ uses the PRV $Travel(M)$. %
Thus, the mayor needs to specify how many persons of those (not) travelling are (not) allowed to travel. %
Suppose that in the current state five out of eight persons are travelling and two persons are sick.
The action $Restrict(M)$ is defined over $\#[Travel(M),Restrict(M)]$.
A concrete action is, e.g., $a=\{(tt,3)$, $(ft, 2)\}$, which means that three persons currently travelling are restricted from travelling and two persons not travelling.
The action $a$ does not need to specify the counts of people no travel ban is imposed on ($tf$, $ff$), as these are determined by $a$ and the current state.
\end{example}

The mayor no longer needs to specify individual persons (c.f. Example~\ref{ex:parameterized_action}), but rather the number of persons (not) travelling, which are restricted from travelling.
It is irrelevant on which exact persons the action is performed.
With this action representation, we reduce the action space from exponential to polynomial, which we prove in Theorem~\ref{theorem:state-representation-size-bounded-by-rcg} in the next section.

In the next subsection, we show how Foreplan uses the action space to compute the value function by solving a linear program.

\subsection{Computing the Value Function}
Let us have a look on how Foreplan computes the value function based on the introduced state representation.
Foreplan uses the linear programming formulation given in Equation~\ref{eq:lp} to compute the value function.
For the linear program, Foreplan uses the introduced state and action representations to iterate over all states and actions.

For instantiating the linear program in Equation~\ref{eq:lp}, we next describe how Foreplan calculates the transition probability.
Since we have full evidence provided, Foreplan evaluates each state CRV $S_i'$ separately.
For a fixed state CRV $S_i'$, the value $s_i'$ is fixed since the whole state space is iterated.
For evaluating $P(s_i' \mid s,a)$, Foreplan iterates over all possible assignment transitions, e.g., from each bucket in the histogram $s$ to each bucket in the histogram $s'$.
In our epidemic example, this is, e.g., the number of people getting sick and healthy.
For each possible assignment transition, Foreplan calculates the transition probability:
The probability for a representative transition is given by the involved parfactors, i.e., the parfactors defined over the PRVs mentioned in $S_i'$.
But this representative probability has to be weighted by how many times this assignment transition is applicable using the multinomial coefficient.
Take for example the state in which five persons are sick.
When three sick persons get healthy, this assignment transition has weight $5 \choose 3$ as any three of the five persons can get healthy.
The final probability of $P(s_i' \mid s,a) = \sum_t w_t p_t$ for a fixed CRV is the sum
over all possible assignment transitions $t$, each with representative probability $p_t$ and weight $w_t$.
Since the state space is factored, the full transition probability $P(s' \mid s, a)$ is factored as $P(s' \mid s, a) = \prod_i P(s_i' \mid s_i, a)$.
We provide a more detailed description and an example in Appendix~\ref{appendix:transition_probabilities}.

Let us have a closer look at the actual creation of Foreplan's linear program:
For each state and action combination, a constraint is added.
Thus, Foreplan can take both, state and action, into account for generating only the necessary constraints.
When an initial state is given, the reachable state space can be pruned, and Foreplan limits all computations to the pruned search space, that is why we are calling Foreplan a \emph{forward} planner.
Moreover, additional checks like mutual exclusion, capacities, or other constraints can be added, either on PRV level or on search space level.
We make this more concrete with an example:
Assume we have four persons travelling.
Then, since the action uses the lifted state representation, the mayor can only restrict up to four persons, which currently travel, from travelling.
The generation of constraints for the assumed state only iterates over the possible actions.
Future work includes an analysis of what types of additional applicability checks are efficient implementable.

With Foreplan, we are able to cope with numerous indistinct objects and actions on collections of those objects. We do so by successfully applying lifting in the field of MDPs.
While traditional approaches can represent actions on sets of objects, they fail to do so efficiently.
Therein, the actions for each subset would be represented on their own, resulting in exponentially many actions.
In the next section, we show the complexity of Foreplan.

\section{Complexity Analysis of Foreplan} \label{sec:foreplan_theorems}
Having outlined Foreplan, we analyze the complexity of Foreplan in this section.
We start by quantifying the state representation and using the complexity of the state representation to derive the runtime complexity of Foreplan.

We derive the following theorem about the size of the state representation from Definition~\ref{def:state-representation}.
\begin{theorem} \label{theorem:state-representation-size}
The state representation is in $\mathcal{O}(c \times 2^w)$.
\end{theorem}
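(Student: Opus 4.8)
The plan is to count the memory footprint of the state representation directly from Definition~\ref{def:state-representation}, which assigns to each of the (maximal) cliques in the relational cost graph exactly one object: a CRV (or, in the degenerate case of an isolated propositional vertex, a plain RV). Since there are $c$ such cliques, it suffices to bound the size of a single CRV by $\O(2^w)$ and then multiply by $c$.

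First I would recall, from Definition~\ref{def:extended_crv}, that the CRV associated with a clique is a counting formula $\#_C[B_1,\dots,B_k]$ over the PRVs $B_1,\dots,B_k$ sitting on the vertices of that clique, so $k \le w$ by definition of $w$. Its range is the set of histograms distributing $n = \abs{C_{\mathcal{L}}}$ elements into $\prod_{i=1}^k \abs{range(B_i)}$ buckets. Since the paper fixes all state variables to be Boolean, $\abs{range(B_i)} = 2$ for each $i$, so the number of buckets is $2^k \le 2^w$. A histogram is therefore a tuple of $2^k$ nonnegative integers summing to $n$; storing it takes $\O(2^k) \le \O(2^w)$ counters (treating each count as a single word), and likewise a single RV contributes $\O(1) \subseteq \O(2^w)$. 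Summing over the $c$ cliques yields a total of $\O(c \times 2^w)$, which is the claim.

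The only mild subtlety — and the one place I would be careful — is what exactly is being measured: the theorem is about the \emph{size of the representation} (the number of state components and the bucket count inside each), not the number of \emph{states} in the induced state space, which is instead governed by how many distinct histograms each CRV admits (a quantity involving $n$). I would state explicitly that we measure the description length / number of buckets, with each count stored as one unit, so that the bound is genuinely independent of the domain sizes. Once that bookkeeping convention is fixed, the argument is the routine multiplication above; I do not anticipate any real obstacle. I would then remark that, combined with the earlier observation that $c$ and $w$ are bounded by (and typically far smaller than) the number of PRVs and in particular do not grow with $\abs{\mathbf{D}}$, this already shows the representation is polynomial — indeed, for fixed $w$, linear — in the number of objects, setting up the runtime analysis that follows.
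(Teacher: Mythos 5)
Your proposal is correct and follows essentially the same argument as the paper: bound the histogram of each clique's CRV by $2^w$ buckets (using Boolean ranges) and multiply by the $c$ cliques. The only difference is that you spell out the bookkeeping (counts stored as units, independence from domain sizes) more explicitly than the paper's brief proof, which is fine but not a different route.
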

\begin{proof}
For each clique in the relational cost graph, the size of the histogram function is exponential in the number of vertices in the clique, as we enumerate all possible assignments.
Thus, the size of the state representation is bounded by $c \times 2^w$.
\end{proof}

Note that $c$ and $w$ are independent of the domain sizes and determined only by the structure of the relational cost graph, and thus in general small.
Also $w$ is bounded by the number of PRVs and $c$ by the number of parfactors in the parfactor model.
Theorem~\ref{theorem:state-representation-size} overapproximates the size of the state representation, as not all cliques have the same size and not both, $c$ and $w$ are large at the same time.
Building on the size of the state representation, we give the complexity of the state and action space:
\begin{theorem} \label{theorem:state-representation-size-bounded-by-rcg}
    The state and action spaces are both polynomial in the number of objects and exponential in $c$ and $w$.
\end{theorem}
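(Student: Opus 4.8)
The plan is to analyze the two spaces separately, since Theorem~\ref{theorem:state-representation-size} already bounds the descriptive size of a single state by $\mathcal{O}(c \times 2^w)$; what remains is to count how many distinct states and actions there are. First I would bound the state space. By Definition~\ref{def:state-representation}, a state assigns a value to each of the $c$ CRVs $S_i$, one per maximal clique. For a clique of size at most $w$ defined over logvars with total grounding count $n_i \le n$ (where $n = \prod_{L \in \mathbf{L}} |\mathcal{D}(L)|$ is polynomial in the number of objects for a fixed set of logvars), the range of the corresponding CRV is the set of histograms distributing $n_i$ elements into at most $2^w$ buckets, per Definition~\ref{def:extended_crv}. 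The number of such histograms is the number of ways to write $n_i$ as an ordered sum of $2^w$ nonnegative integers, i.e.\ $\binom{n_i + 2^w - 1}{2^w - 1} \le (n_i+1)^{2^w - 1}$, which is polynomial in $n_i$ (hence in the number of objects) with exponent controlled by $2^w$, and thus exponential in $w$. Taking the product over all $c$ cliques multiplies these bounds, contributing the exponential dependence on $c$. Hence $|\mathbf{S}| \le \prod_{i=1}^{c}(n_i+1)^{2^w-1} \le (n+1)^{c(2^w-1)}$, which is polynomial in the number of objects and exponential in $c$ and $w$.

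Next I would bound the action space. By Example~\ref{ex:action_with_state} and the surrounding discussion, a concrete action is specified by, for each parfactor mentioning an action PRV, a histogram over the joint assignments of the action PRVs together with the state PRVs they co-occur with — exactly the counting construction used for states, restricted to the relevant cliques. So the same histogram-counting argument applies: each such action-histogram ranges over at most $(n+1)^{2^{w'}-1}$ values, where $w'$ is bounded by the largest clique size $w$, and there are at most $c$ such histograms (one per relevant clique/parfactor, bounded by the number of PRVs, hence by $c$'s bound). Multiplying gives an action-space bound of the same form, polynomial in the number of objects and exponential in $c$ and $w$. I would also remark, to complete the contrast promised before the theorem, that the naive ground action space has size $\prod_A 2^{|gr(A)|}$, exponential in the number of objects, so the lifted representation is a genuine exponential-to-polynomial reduction.

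The routine part is the binomial estimate $\binom{m + k - 1}{k-1} \le (m+1)^{k-1}$ and bookkeeping over cliques. The main obstacle — and the step I would be most careful about — is making precise that the histogram representation of actions is \emph{lossless}, i.e.\ that iterating over action-histograms really enumerates all (lifted-equivalent classes of) concrete actions and no spurious ones. This requires appealing to the indistinguishability assumption in Definition~\ref{definition:rfMDP} (that all functions are oblivious to object identity) and to the correctness of the state representation (Theorem~\ref{theorem:state-representation-correctness}), from which the action-side statement follows by the same reconstruction argument: given the current state's CRVs and the action-histograms, one recovers a representative grounded action whose effect under $T$ is determined, and any two grounded actions with the same histograms are related by an object permutation and hence yield identical transition probabilities. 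I would defer the detailed version of this correspondence to an appendix, mirroring the treatment of Theorem~\ref{theorem:state-representation-correctness}, and in the main text give the counting bounds above.
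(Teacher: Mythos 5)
Your proposal is correct and follows essentially the same route as the paper's proof: bound the number of possible instantiations (histograms) per clique, which is polynomial in the number of objects with the exponent governed by the clique size, take the product over the $c$ cliques, and bound the action space by the same histogram/state-specification argument; you simply make the stars-and-bars count $\binom{n_i+2^w-1}{2^w-1}\le (n_i+1)^{2^w-1}$ and the losslessness of the action histograms explicit, which the paper leaves implicit.
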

\begin{proof}
    We need to iterate over all possible instantiations of the state representation.
    For each clique (resp. CRV), the number of possible instantiations is polynomial in the number of objects and exponential in $w$.
    The joint state requires one instantiation per clique, resulting in the number $c$ of CRVs as exponent.
    The size of the action space is bounded by the size of the state space, as the action has to specify a (subset of a) state.
\end{proof}

Since Foreplan uses a linear program to compute the value function, we analyze the complexity of solving the linear program Foreplan builds.
Linear programs can be solved in polynomial time w.r.t the variables and constraints~\cite{63499}.
Let us therefore take a closer look at the number of constraints and variables Foreplan generates:

\begin{theorem} \label{theorem:lp_number_constraints_variables}
    The number of linear programming constraints and variables Foreplan creates are polynomial in the state space.
\end{theorem}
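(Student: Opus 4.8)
The plan is to read the variable and constraint counts directly off the linear program in Equation~\ref{eq:lp} and then substitute the bounds already established for the lifted state and action spaces. The LP has exactly one decision variable $V(s)$ per state $s$ in the compact state space of Definition~\ref{def:state-representation}, so the number of variables equals $\abs{\mathbf{S}}$, which is trivially linear --- hence polynomial --- in the size of the state space. For the constraints, Equation~\ref{eq:lp} introduces one inequality for each pair $(s,a) \in \mathbf{S} \times \mathbf{A}$, so the number of constraints is $\abs{\mathbf{S}} \cdot \abs{\mathbf{A}}$.

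The remaining step is to bound $\abs{\mathbf{A}}$ in terms of $\abs{\mathbf{S}}$. This is exactly the content of Theorem~\ref{theorem:state-representation-size-bounded-by-rcg}: in the lifted representation a concrete action specifies counts over (a subset of) the state CRVs --- as in Example~\ref{ex:action_with_state} --- so the action space is no larger than the state space, i.e.\ $\abs{\mathbf{A}} \leq \abs{\mathbf{S}}$. Combining the two observations, the number of constraints is at most $\abs{\mathbf{S}}^2$, which is polynomial (quadratic) in the state space, and together with the variable count this proves the claim.

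I would also add, as a remark rather than as part of the proof, that each individual constraint has at most $\abs{\mathbf{S}}+1$ terms on its right-hand side (the reward term plus one summand $P(s' \mid s,a)\,V(s')$ per successor state $s'$), so the entire encoding of the LP --- not merely its number of rows and columns --- is polynomial in $\abs{\mathbf{S}}$; this is what actually licenses the subsequent appeal to polynomial-time solvability of linear programs~\cite{63499} in the runtime analysis.

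The main obstacle here is essentially bookkeeping rather than any real difficulty: one must be careful that ``$\mathbf{S}$'' throughout refers to the compact state space of Definition~\ref{def:state-representation} (the set of tuples of histograms) and not the exponentially larger ground space $\{0,1\}^m$, and that the relevant action space is the lifted one rather than the power set over the population. Once the notion of state space is pinned down, the counting argument and the invocation of Theorem~\ref{theorem:state-representation-size-bounded-by-rcg} are immediate.
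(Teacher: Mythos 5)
Your proposal is correct and follows essentially the same route as the paper's proof: read off one variable per state and one constraint per state--action pair from Equation~\ref{eq:lp}, then bound the action space by the state space via Theorem~\ref{theorem:state-representation-size-bounded-by-rcg}. You merely make explicit (the $\abs{\mathbf{A}} \leq \abs{\mathbf{S}}$ step and the per-constraint encoding size) what the paper leaves implicit in its one-sentence argument.
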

\begin{proof}
By Eq.~\ref{eq:lp}, Foreplan generates one variable per possible state and one constraint for each state and action combination.
\end{proof}

Plugging Theorem~\ref{theorem:state-representation-size-bounded-by-rcg} into Theorem~\ref{theorem:lp_number_constraints_variables} leads to:
\begin{theorem} \label{theorem:foreplan-complexity}
The runtime of Foreplan is polynomial in the number of objects and exponential in $c$ and $w$.
\end{theorem}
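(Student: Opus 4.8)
The plan is to chain together the structural results already established in this section, so the proof is essentially a bookkeeping argument that tracks how each contribution to the running time depends on the domain size versus the structural parameters $c$ and $w$. First I would recall that Foreplan consists of two phases: constructing the state (and action) representation from the relational cost graph, and then building and solving the linear program from Equation~\ref{eq:lp}. The construction phase is dominated by enumerating the state space, so its cost is governed by Theorem~\ref{theorem:state-representation-size-bounded-by-rcg}: the number of states is polynomial in the number of objects and exponential in $c$ and $w$ (one CRV per clique, each with at most $2^w$ buckets, each bucket ranging over at most $n$ objects, giving roughly $n^{c \cdot 2^w}$ joint states). The action space has the same bound since an action specifies (a subset of) a state.

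Next I would handle the linear-programming phase. By Theorem~\ref{theorem:lp_number_constraints_variables}, the LP has one variable per state and one constraint per state--action pair, hence a number of variables and constraints polynomial in the size of the state space (and thus, substituting Theorem~\ref{theorem:state-representation-size-bounded-by-rcg}, polynomial in the number of objects and exponential in $c$ and $w$). I would then invoke the cited polynomial-time solvability of linear programs~\cite{63499} to conclude that solving this LP also runs in time polynomial in the number of variables and constraints, which preserves the ``polynomial in objects, exponential in $c$ and $w$'' shape. One detail worth stating explicitly: one must check that evaluating a single constraint --- i.e., computing $P(s'\mid s,a)$ for fixed $s,s',a$ --- is itself only polynomial in the number of objects, since this value appears as a coefficient in the LP; as sketched in the paper, this is done per state CRV $S_i'$ by iterating over assignment transitions and using multinomial coefficients, which is polynomial in $n$ and exponential in $w$. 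Aggregating: the overall coefficient-computation work is (number of constraints) $\times$ (per-constraint cost), which remains within the claimed bound since a product of two such expressions is again polynomial in $n$ and exponential in $c$ and $w$.

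Finally I would combine the two phases: the total runtime is the sum of the representation-construction time and the LP-build-and-solve time, and since both are polynomial in the number of objects and exponential in $c$ and $w$, so is their sum. The main obstacle I anticipate is not any deep argument but rather making precise that the ``exponential in $c$ and $w$'' factors do not compound into something worse when we compose the steps --- in particular, that squaring a bound of the form $n^{p(c,w)}$ (which happens when we multiply number of constraints by per-constraint cost, or when the LP solver's polynomial is applied to an already-exponential input size) still yields a bound of the same qualitative form. This is true because $(n^{p(c,w)})^{O(1)} = n^{O(1)\cdot p(c,w)}$, still polynomial in $n$ with an exponent depending only on $c$ and $w$; I would state this closure property once and appeal to it wherever steps are composed. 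Everything else is a direct substitution of the earlier theorems, so no separate lemmas are needed.
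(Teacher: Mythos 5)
Your argument is essentially the paper's own proof: the paper obtains this theorem by plugging Theorem~\ref{theorem:state-representation-size-bounded-by-rcg} into Theorem~\ref{theorem:lp_number_constraints_variables} and appealing to polynomial-time LP solvability, exactly the chain you describe. Your additional remarks on the cost of computing the transition-probability coefficients (matching Appendix~\ref{appendix:transition_probabilities}) and on the closure of the bound under composition are correct refinements of the same approach, not a different route.
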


Therefore, with Foreplan, we have introduced an exact planner with a runtime polynomial instead of exponential w.r.t domain sizes for concurrent actions.
Thus, we have already achieved an exponential speed up.

Our main contribution, Foreplan, significantly advances the state of the art.
However, as it is an exact algorithm, the value function is still not factorized.
In case the runtime is of utter importance, we can also approximate the value function, following the idea of \customcite{approx_factored_mdps}, which has also been picked up by, e.g., \customcite{10.5555/3020336.3020398}.
By approximating the value function, we can prevent iterating over the joint state space leading to a blazingly fast, but approximate, version of Foreplan.

\section{Foreplan: Even Faster by Approximation}
While Foreplan runs in time polynomial in the number of objects, the runtime still exponentially depends on $c$.
In this section, we present an approximation technique inspired by the Approximate Linear Programming (ALP) approach~\cite{approx_factored_mdps} to prevent iterating the whole state space.
Our approach follows the same idea as ALP and ALP for first-order MDPs~\cite{10.5555/3020336.3020398}.
We first describe the approximation idea and then how Foreplan uses the approximation for our new representation and for concurrent actions.
Last, we give bounds on the runtime and on the approximation quality.
We call the approximate version \emph{Approximate Foreplan}.
Since Approximate Foreplan is an extension of our main contribution Foreplan, we refer to the Appendix for technical details.

Foreplan needs to iterate over the whole state space, because the value function maps from a state to the value of that state.
We approximate the value function by a set $h_i$ of basis functions, whose scope is a subset of $\mathbf{S}$, $V \approx \sum_i w_i h_i$, where the goal is to find the most suitable weights $w_i$~\cite{approx_factored_mdps}.
Approximate Foreplan also needs the value of all possible next states in terms of the same approximation.
Thus, Approximate Foreplan uses \emph{backprojections} $g_i^a$ of the basis functions $h_i$~\cite{approx_factored_mdps}, stating the influence of $x$ on the next state:
\begin{equation}
g_i^a(x) = \sum_{x'} P(x' \mid x,a) \cdot h_i(x')
\end{equation}
To compute the basis functions and backprojections in a lifted way, we parameterize the basis functions analogously to the reward functions and calculate them in the same way.
The basis functions should capture the important dynamics in the model~\cite{koller1999computing}, most importantly the rewards~\cite{10.5555/3020336.3020398}.
As proposed by \customcite{koller1999computing} and picked up in \customcite{10.5555/3020336.3020398}, we also use basis functions for capturing the rewards alongside a constant basis function:
\begin{example}[Basis Function] \label{ex:lifted_basis_function}
We have three basis functions: $h_0$ $\coloneq$ $1$, $h_1(Sick(M))$ $\coloneq$ $R_1(Sick(M))$ as well as $h_2(Travel(M))$ $\coloneq$ $R_2(Travel(M))$.
\end{example}
The backprojections are computed in a lifted way:
\begin{definition}[Lifted Backprojection] \label{def:lifted-backprojection}
Given a basis function $h_i$ and Boolean assignments $\tilde{x}$ and $\tilde{a}$ to the state and action, respectively, the backprojection is defined as $g_i^{\tilde{a}}(\tilde{x}) = \sum_{\tilde{x}'} P(\tilde{x}' \mid \tilde{x},\tilde{a}) \cdot h_i(\tilde{x}')$.
The \emph{lifted backprojection} $G_i^a(x)$ for a state $x$ and action $a$ then sums $g_i^{\tilde{a}}(\tilde{x})$ for each possible propositional assignment $\tilde{x}$ and $\tilde{a}$ and weights the term with the counts given by the state $x$.
\end{definition}

Let us apply the backprojection in our running example:

\begin{example}[Lifted Backprojection]
Suppose we have three sick persons and two healthy ones and are interested in the backprojection of $h_1$.
Then, we have $G_1([(t,3),(f,2)],epi)=3 \cdot g_1(true,epi) + 2 \cdot g_1(false,epi)$.
We show the full calculation of all backprojections for our running example in Appendix~\ref{apx:walkthrough-backprojections}.
\end{example}

Approximate Foreplan precomputes all backprojections and then builds the following linear program~\cite{approx_factored_mdps}, whose result is the approximated value function and thus constitutes the result: %

\begin{equation} \label{eq:factored_lp}
\begin{array}{ll@{}}
\text{Variables:} & w_1,\dots,w_n \ ;\\
\text{Minimize:} & \displaystyle\sum_{i=1}^n \alpha_i w_i \ ;\\
\text{Subject to:} & \forall a \in A: \\
& 0 \geq \displaystyle\max_x  \left\{R(x) + \displaystyle\sum_{i=1}^n w_i \left(\gamma G_i^a(x) - h_i(x)\right)\right\}. \\
\end{array}
\end{equation}
The $\alpha_i$'s are effectively coefficients for a linear combination over the $w_i$, stating how important the minimization of each $w_i$ is~\cite{approx_factored_mdps,doi:10.1287/opre.51.6.850.24925}.
The maximum operator is no operator in linear programs and is removed in an operation similar to variable elimination (VE)~\cite{ve}.
In Appendix~\ref{appendix:ve}, we provide an example for the removal procedure.
Moreover, in Appendix~\ref{apx:walkthrough}, we show a walkthrough of Approximate Foreplan on the epidemic example.

For the runtime analysis, we introduce the \emph{cost network} briefly:
The cost network for a constraint has a vertex for each appearing variable and there is an edge in the cost network between two vertices if the corresponding variables appear together in the same reward or basis function, or backprojection.
For the complexity analysis, we use the structural graph parameter \emph{induced width}, which provides an upper bound for the largest intermediate result~\cite{dechter1999bucket}:

\begin{theorem} \label{theorem:approximate:relational_cost_network_bounded_clique}
Approximate Foreplan runs in time polynomial in the number of objects, polynomial in $c$ and exponential in the induced width of each cost network, when $w$ is bounded.
\end{theorem}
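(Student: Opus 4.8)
The plan is to split the running time of Approximate Foreplan into three phases and bound each separately: (i) computing the parameterized basis functions and their lifted backprojections, (ii) generating the linear-programming constraints of Equation~\ref{eq:factored_lp} by removing the inner $\max_x$ through the variable-elimination-style procedure on the cost networks (Appendix~\ref{appendix:ve}), and (iii) solving the resulting linear program. The working hypothesis is that $w$, the size of the largest clique in the relational cost graph, is bounded, so every clique contributes only $\le 2^w$ Boolean configurations and, after counting, only $O(n^{2^w})$ histogram values, which is polynomial in the number $n$ of objects.

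For phase (i): each basis function $h_i$ is parameterized with scope contained in a single clique (cf.\ Example~\ref{ex:lifted_basis_function}), so its propositional version ranges over at most $2^w$ assignments. Each backprojection $g_i^{\tilde a}(\tilde x)$ (Definition~\ref{def:lifted-backprojection}) is obtained by summing over the next-state assignments in that bounded scope, using the parfactors as lookup tables, costing $O(2^w)$ per entry; the lifted backprojection $G_i^a(x)$ is then a symbolic function of the CRVs whose cliques overlap the scope of $h_i$ and of $a$, and evaluating it at a given count vector additionally requires multinomial coefficients, again polynomial. Since there are polynomially many (in $c$) basis functions and backprojections, phase (i) is polynomial in $n$, polynomial in $c$, and exponential in $w$ only (hence constant under the hypothesis).

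Phase (ii) is the heart of the argument. For a fixed action $a$, the expression inside $\max_x$ in Equation~\ref{eq:factored_lp} is a sum of local terms --- the local rewards $R_j$, the terms $w_i h_i$, and the terms $\gamma w_i G_i^a$ --- each with scope a small set of CRVs. I would build the cost network for this sum (vertices the appearing CRVs) and run the elimination procedure of Appendix~\ref{appendix:ve}, mirroring the ALP removal of the maximum~\cite{approx_factored_mdps,ve}. The key quantitative claim to establish is that every intermediate factor has scope at most the induced width (measured in CRVs) plus one; since each CRV has domain of size $O(n^{2^w})$, each elimination step costs $n^{O(\mathrm{iw})}$, i.e.\ polynomial in $n$ and exponential in the induced width, and introduces only that many new LP variables and constraints. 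There are $O(c)$ elimination steps, one per CRV, so the constraint set for one action is polynomial in $n$, polynomial in $c$, and exponential in the induced width; iterating over the action space adds a further factor that is polynomial in $n$ by Theorem~\ref{theorem:state-representation-size-bounded-by-rcg}, with action cliques absorbed into the same $c$ and $w$ bookkeeping. Finally, for phase (iii), the linear program then has polynomially (in $n$, $c$) and exponentially (in the induced width) many variables and constraints, and is solved in time polynomial in those quantities~\cite{63499}. Combining the three phases yields the stated bound.

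The main obstacle I anticipate is making phase (ii) rigorous: one must verify that the cost-network elimination, carried out over counting random variables rather than over ground state variables, genuinely keeps intermediate-factor scopes governed by the induced width of the CRV-level cost network (and not by ground-level interactions), and that the only price for moving to histograms is the $O(n^{2^w})$ blow-up in each variable's domain --- which is exactly where "$w$ bounded" is indispensable. A secondary subtlety is confirming that the per-action iteration, together with any action CRVs, does not reintroduce an exponential-in-$c$ dependence, i.e.\ that the action contribution is correctly folded into the polynomial-in-$c$, exponential-in-induced-width accounting.
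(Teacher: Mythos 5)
Your overall route is the same as the paper's: bound the cost of precomputing the lifted basis functions and backprojections, bound the constraint set produced by the variable-elimination removal of the inner maximum by the induced width of the cost network (the paper does this by citing \cite{dechter1999bucket}, you re-derive it over CRV-valued variables with domains of size $O(n^{2^w})$, which is exactly where ``$w$ bounded'' enters), and then solve the resulting linear program in time polynomial in its size \cite{63499}. Your phases (i) and (iii) and the induced-width accounting in phase (ii) are correct and in fact more explicit than the paper's one-paragraph argument.

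The genuine gap is the ``polynomial in $c$'' part of the claim, which you yourself flag as an unresolved ``secondary subtlety'' but never close. You bound the outer iteration $\forall a \in A$ via Theorem~\ref{theorem:state-representation-size-bounded-by-rcg}, but that theorem makes the action space \emph{exponential} in $c$ (and $w$), so as written your accounting yields a number of LP constraints, and hence a runtime, exponential in $c$ --- which is precisely what the theorem is supposed to avoid. The paper's proof closes this by arguing that, because the maximum operator treats each clique independently, the \emph{effective} state space in the approximate formulation is no longer the Cartesian product over cliques but only polynomial in $c$, and the action space relevant to constraint generation is bounded by this effective state space; in other words, the exact-Foreplan bound of Theorem~\ref{theorem:state-representation-size-bounded-by-rcg} is deliberately not the bound used here. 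To repair your phase (ii) you would need to make this same argument (or an equivalent per-clique decomposition of the action enumeration) explicit rather than absorbing the action cliques ``into the same $c$ and $w$ bookkeeping,'' since that bookkeeping is exponential in $c$.
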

\begin{proof}[Proof Sketch]
The full proof is in Appendix~\ref{appendix:approx_foreplan_runtime}.
Approximate Foreplan has to precompute the backprojections and solve the linear program in Equation~\ref{eq:factored_lp}.
The first involves constantly many iterations of the state and action spaces.
The second is linear in the action space and exponential in the induced width of each cost network~\cite{approx_factored_mdps,dechter1999bucket}.
Because Approximate Foreplan does not iterate over the whole state space, but treats each clique independently in the maximum operator, the effective state space is no longer exponential in $c$, but polynomial, and the action space is bound by the state space.
\end{proof}

Most notably, $w$ and induced width in Theorem~\ref{theorem:approximate:relational_cost_network_bounded_clique} are mostly small and fixed, leading to a polynomial runtime, as the growth in the number of objects is of more interest.
Combining the relational cost graph and the cost networks in a single \emph{total relational cost graph}, we show in Appendix~\ref{appendix:foreplan-theorems} that the runtime of Approximate Foreplan is polynomial in the number of objects when the treewidth of the total relational cost graph is bounded.

Moreover, we can show that Approximate Foreplan and ALP compute the same solutions:

\begin{theorem} \label{theorem:equivalence_alp_aforeplan}
Given  an rfMDP $R$, Approximate Foreplan and ALP are equivalent on $R$ and the grounded version of $R$, respectively.
\end{theorem}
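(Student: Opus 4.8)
The plan is to exhibit an explicit correspondence between the linear program Approximate Foreplan builds (Equation~\ref{eq:factored_lp}) and the ALP linear program obtained by grounding $R$, and then to argue that the two programs have the same variables, the same objective, and the same feasible region, hence the same optimal solutions and the same induced value function. First I would fix notation: let $gr(R)$ be the ground fMDP obtained from $R$, let $\sigma$ be the map sending a ground state to its histogram encoding of Definition~\ref{def:state-representation} (surjective onto the lifted state space by Theorem~\ref{theorem:state-representation-correctness}), and likewise for ground actions. Grounding the parameterized basis functions $h_1,\dots,h_n$ of $R$ exactly as the parameterized reward functions are grounded yields basis functions $\bar h_1,\dots,\bar h_n$ for $gr(R)$, each $\bar h_i$ being the sum over the groundings of $h_i$; running ALP on $gr(R)$ with this architecture produces the ground instance of Equation~\ref{eq:factored_lp}, with $x$ ranging over ground states, $a$ over ground actions, $R$ the ground reward, and $g_i^a$ the ground backprojection.

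The core of the argument is to show that the three object-level quantities appearing in the constraints agree under $\sigma$. For the reward this is immediate: the parameterized-reward semantics makes the ground reward of $gr(R)$ — the sum over all groundings of the local reward functions — coincide with the lifted reward evaluated at $\sigma(x)$, which is well defined because it depends on a ground state only through its counts. The same reasoning applies verbatim to $\bar h_i(x) = h_i(\sigma(x))$, since by construction the basis functions are parameterized exactly like the rewards. The remaining and main step is the backprojection identity $g_i^a(x) = G_i^{\sigma(a)}(\sigma(x))$ with $G_i$ as in Definition~\ref{def:lifted-backprojection}: I would expand $g_i^a(x) = \sum_{x'} P(x'\mid x,a)\,\bar h_i(x')$, use the factorization of $T$ together with the additivity of $\bar h_i$ over groundings to split the sum grounding by grounding, and match the resulting per-assignment backprojections, weighted by the counts of $x$, against Definition~\ref{def:lifted-backprojection}, reusing the multinomial transition bookkeeping of Appendix~\ref{appendix:transition_probabilities}. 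This identity also shows that a ground constraint depends on its action only through $\sigma(a)$.

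Given these identities, each ALP constraint $0 \ge \max_x\{R(x) + \sum_i w_i(\gamma g_i^a(x) - \bar h_i(x))\}$ rewrites, using $\sigma$ and its surjectivity on states, as precisely the Approximate Foreplan constraint for the lifted action $\sigma(a)$; ground actions with equal $\sigma(a)$ yield identical constraints, so Equation~\ref{eq:factored_lp} is exactly the set of distinct constraints of the ground ALP program. Choosing the state-relevance weights consistently — a permutation-invariant $\alpha$, for instance the uniform one, which induces the same coefficients $\alpha_i$ in both programs — makes the objectives coincide as well. Hence the two programs are equivalent linear programs: they share their optimal weight vectors and therefore the value function $\sum_i w_i^\ast h_i$ (identified with $\sum_i w_i^\ast \bar h_i$ under grounding) and the greedy policy it induces, which is what lets ALP's weighted-$\mathcal{L}_1$ approximation guarantee transfer to Approximate Foreplan.

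The step I expect to be the real obstacle is the backprojection identity, where one must be careful that the order of operations — summing out the ground next-state distribution and then aggregating over groundings versus the per-assignment-then-count scheme of Definition~\ref{def:lifted-backprojection} — actually matches, which hinges on $T$ factorizing and on $\bar h_i$ being additive over groundings. A secondary subtlety, which I would address explicitly, is the treatment of weights: if one lets ALP on $gr(R)$ optimize an untied weight per grounding rather than per parameterized basis function, an additional symmetry argument is needed, namely that the ground ALP program is invariant under the automorphism group permuting indistinguishable groundings, so an optimal solution can be chosen in the symmetric (tied-weight) subspace — which is exactly the space over which Approximate Foreplan optimizes — leaving the optimal value and guarantee unchanged.
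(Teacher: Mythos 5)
Your plan follows essentially the same route as the paper's proof: establish that the lifted basis functions and lifted backprojections evaluate to the same quantities as the sums of their groundings, conclude constraint-by-constraint correspondence of the two linear programs, match the objectives by choosing the state-relevance coefficients consistently (the paper's $\alpha_i = n_i\cdot\alpha'_i$), and invoke the state/action-representation correctness to cover the whole action space. Your treatment is in fact somewhat more careful than the paper's — notably the explicit $\sigma$-map, the per-assignment/multinomial verification of the backprojection identity, and the symmetry (tied-weights) argument for untied ground weights, which the paper's proof only implicitly assumes — but it is the same argument, not a different one.
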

\begin{proof}[Proof Sketch]
The full proof is in Appendix~\ref{appendix:foreplan-approximation}.
With appropriate $\alpha_i$, the objective function carries out lifted computation of the grounded basis functions. %
Each constraint is correct, because the lifted backprojections and lifted basis functions compute the same value as for the grounded model.
The action representation covers the whole action space.
\end{proof}

With ALP and Approximate Foreplan being equivalent, we transfer the approximation guarantee for ALP to Approximate Foreplan:

\begin{corollary}[Approximation Guarantee~\cite{approx_factored_mdps,doi:10.1287/opre.51.6.850.24925}]
Approximate Foreplan provides the best approximation of the optimal value function in a weighted $\mathcal{L}_1$ sense, where the weights in the $\mathcal{L}_1$ norm are the state relevance weights $\alpha$.
\end{corollary}
\begin{proof}
Since the claim holds for ALP~\cite{approx_factored_mdps,doi:10.1287/opre.51.6.850.24925}, the proof follows by the equivalence of ALP and Approximate Foreplan.
\end{proof}

With Approximate Foreplan, we reduce the runtime further from exponential in $c$ to polynomial in $c$.
In the next section, we evaluate the runtimes of Foreplan and Approximate Foreplan empirically.

\section{Empirical Evaluation}
(Approximate) Foreplan runs in time polynomial in the number of objects, but other terms are unavoidably  exponential.
In contrast to current approaches, the exponential terms of both Foreplan variants depend only on the structure of the rfMDP and not on the number of objects.
To underline our theoretical results, we evaluate (Approximate) Foreplan against ALP and an implementation of symbolic value iteration (VI) using extended algebraic decision diagrams (XADDs)~\cite{pmlr-v162-jeong22a,taitler2022pyrddlgym} for the epidemic example introduced in Example~\ref{ex:epidemic} as well as for the BoxWorld~\cite{boutilier2001symbolic} and a fully-connected SysAdmin~\cite{approx_factored_mdps} example, with the latter two extended to concurrent actions.
We also assess the quality of the policy given by Approximate Foreplan.
We use Python 3.12 and HiGHS for solving the linear programs~\cite{highs}.
We run all implementations on an AMD EPYC 7452 32-Core Processor with 504 GB of RAM.

Figure~\ref{fig:epidemic_total_runtime} shows the runtime for the epidemic example for (Approximate) Foreplan, ALP and XADD Symbolic VI for up to 191 persons with a time limit of two hours.
XADD Symbolic VI exceeds the time limit after ten persons, ALP after 15 persons.
Foreplan times out after 20 persons and Approximate Foreplan after 191 persons.
For ten persons, Foreplan is $91$ times faster than XADD Symbolic VI.
Approximate Foreplan is even $14.000$ times faster, which are four orders of magnitude.
For 15 persons, Foreplan is more than four times faster than ALP and Approximate Foreplan is more than $4.217$ times faster than ALP.
For 20 persons, Approximate Foreplan is more than $4.077$ times faster than Foreplan.
Thus, when using a symbolic solver, we can only solve the epidemic example for up to ten persons.
With a factored and approximate approach, we can go up to 15 persons.
In contrast, when using Foreplan, we can solve the epidemic example even for 20 persons and with Approximate Foreplan we can go further to 191 persons, which is ten times more than what ALP can solve.

We also run the four algorithms on a fully connected SysAdmin example with a timeout of two hours.
XADD Symbolic VI times out after eleven computers and ALP after 12.
With Foreplan, we can go up to 64 computers and with Approximate Foreplan even up to 94 computers.
At eleven computers, Foreplan is more than $44.413$ times faster than XADD Symbolic VI and Approximate Foreplan is even more than $69.944$ times faster.
At 12 computers, Foreplan is more than $8.359$ times faster than ALP and Approximate Foreplan even more than $16.173$ times.
For 64 computers, Approximate Foreplan is more than $696$ times faster than Foreplan.
In Appendix~\ref{appendix:evaluation}, we present the runtime figure.

For the BoxWorld example, we use three cities and set a time limit of 15 hours.
XADD Symbolic VI times out after two boxes and trains.
ALP and Foreplan time out after nine boxes and trains.
Approximate Foreplan manages to go up to 30 boxes and trains.
At two boxes and trains, Foreplan is more than $83.227$ times faster than XADD Symbolic VI and Approximate Foreplan even more than $137.502$ times.
At nine boxes and trains, Foreplan is more than three times faster than ALP and Approximate Foreplan is even more than $33.136$ times faster than ALP.
In Appendix~\ref{appendix:evaluation}, we present the runtime figure.

For assessing the quality of the policy given by Approximate Foreplan, we calculate the ratio of wrong actions over the total number of actions in every ground state.
For ten persons, ALP and Approximate Foreplan both deviate in $1.2$ \% of all actions.
For two to ten persons, the deviation was $2.98$ \% at most, with reducing errors when increasing the number of individuals.
In all cases, ALP and Approximate Foreplan have an identical error.
For SysAdmin, ALP and Approximate Foreplan both return the optimal policy in the tests going up to nine computers.
For BoxWorld, we refer to Appendix~\ref{appendix:evaluation}.

Overall, (Approximate) Foreplan achieves a speedup of several orders of magnitude and computes policies for significantly more objects within the same time and memory limits.
Moreover, the policy given by Approximate Foreplan is correct for SysAdmin and for more than $97$ \% of all actions in the epidemic example.

\begin{figure}[tb]
	\centering
	\includegraphics[width=\linewidth]{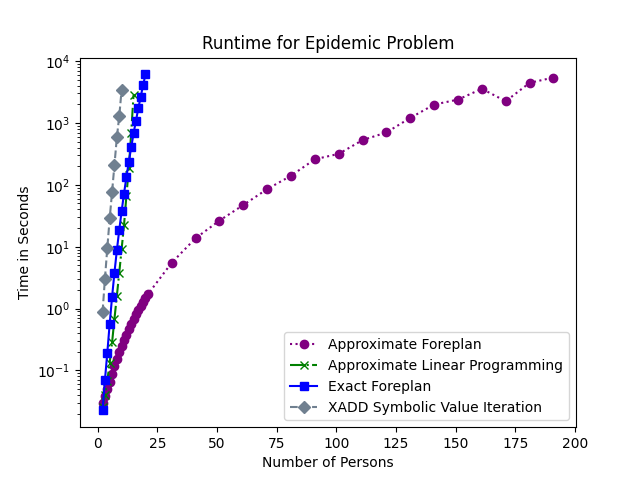}
	\caption{Runtime of (Approximate) Foreplan, ALP and XADD Symbolic VI on the epidemic example for up to 191 persons with a time limit of two hours.}
	\label{fig:epidemic_total_runtime}
    \Description{Graphical runtime comparison of Foreplan, Approximate Foreplan, Approximate Linear Programming and XADD Symbolic VI.}
\end{figure}

\section{Conclusion}
Propositional planning approaches struggle with having numerous indistinct objects and different types of actions applied concurrently. %
While first-order MDPs can cope with numerous objects, current approaches for solving planning in them neglect concurrent actions and still need to represent actions for each subset individually, resulting in exponentially many actions.
In this paper, we present Foreplan, a relational forward planner solving the exponential explosion by lifting the objects: %
Foreplan groups indistinguishable objects by a representative and carries out calculations on a representative-level. Afterwards, the result is projected to the whole group.
Using histograms and focusing only on the number of objects an action is applied to, we effectively reduce the action space from exponential to polynomial in the number of objects.

In future work, we aim to develop a hybrid approach combining Foreplan and Golog~\cite{levesque1997golog}. With the forward search in Foreplan, we can identify states reachable from the initial state while the backwards search in Golog computes the exact optimal policy.
Furthermore, the techniques from Foreplan can be transferred to first-order partially observable MDPs~\cite{williams2005factored}.

\begin{acks}
The research for this paper was funded by the Deutsche Forschungsgemeinschaft (DFG, German Research Foundation) under Germany's Excellence Strategy – EXC 2176 'Understanding Written Artefacts: Material, Interaction and Transmission in Manuscript Cultures', project no. 390893796. The research was conducted within the scope of the Centre for the Study of Manuscript Cultures (CSMC) at Universität Hamburg.
\end{acks}

\bibliographystyle{ACM-Reference-Format}
\bibliography{paper-lifting-factored-mdps}

\appendix
\section{Omitted Details}

\subsection{Examples for Hyperoperations in Relational Factored MDPs} \label{appendix:hyperoperations}
Definition~\ref{definition:rfMDP} requires the rewards to be represented as a sum of local reward functions.
In fact, this requirement can be lifted to a broader one: Any operation $\oplus$ can run over a collection of local reward functions, as long as this operation has a hyperoperation $\bigoplus$, which performs $\oplus$ repeatedly.
The most basic example is addition ($\oplus = +$) and multiplication as the corresponding hyperoperation ($\bigoplus= \cdot$).
Moreover, these operations can be linked together in any way: We compute each operation on a collection of local reward functions in a lifted manner and then compute the overall result following the linking.

Let us first define some local reward functions and the reward function:
\begin{example}
    Assume we have PRVs $A(M)$ and $B(M)$, both with Boolean domain.
    We define the local reward functions $r_1(A(M))$ with $r_1(false) = 2$ and $r_1(true) = 5$.
    We further define $r_2(B(M))$ with $r_2(false)=-1$ and $r_2(true) = 1$.
    We define the reward function as $\sum_{m \in M} r_1(m) \cdot \prod_{m \in M} r_2(m)$
\end{example}

The lifted computation of the reward function then calculates each basic term, i.e., $\sum_{m \in M} r_1(m)$ and $\prod_{m \in M} r_2(m)$, separately and combines the results using the linking operations, which is in this example the multiplication:
\begin{example}
    Assume $\abs{\mathcal{D}(M)}=5$ and $A(M)$ is true for two of them and $B(M)$ is true for three of them.
    The lifted computation of the first term is then:
    \begin{equation}
        \sum_{m \in M} r_1(m) = 2 \cdot r_1(true) + 3 \cdot r_2(false),
    \end{equation}
    and for the second term:
    \begin{equation}
        \prod_{m \in M} r_2(M) = \left(r_2(true)\right)^3 \cdot \left(r_2(false)\right)^2.
    \end{equation}
    The overall result is thus:
    \begin{equation}
        \left( 2 \cdot r_1(true) + 3 \cdot r_2(false) \right) \cdot \left( \left(r_2(true)\right)^3 \cdot \left(r_2(false)\right)^2 \right).
    \end{equation}
\end{example}

The hyperoperation is needed to \emph{reuse} the result of one representative, lifting it to the group.

\subsection{Formal Definition of Relational Cost Graph} \label{appendix:rcg_formal_definition}
To give a formal representation of the relational cost graph, we first define an undirected graph:

\begin{definition}[Graph]
    A graph $G = (V,E)$ is tuple of a set $V$ of vertices and $E \subseteq V \times V$ of edges.
    We require $E$ to be symmetric, i.e., $(a,b) \in E \implies (b,a) \in E$.
\end{definition}

We further need some shorthand syntax to extract the logvars a PRV is defined over and to extract the PRVs a parfactor or local reward function is defined over:

\begin{definition}[Argument Extraction]
    Given a PRV $B$ defined over logvars $L_1,\dots,L_n$, the notation $args(B)$ returns the logvars $L_1,\dots,L_n$, i.e.,
    \begin{equation}
        args(B) \coloneqq \{L_1,\dots,L_n\}.
    \end{equation}
    We analogously define $args$ to return the arguments of a parfactor or local reward function:
    Given a parfactor $\phi$ defined over PRVs $B_1, \dots B_m$, $args(\phi)$ returns those PRVs:
    \begin{equation}
        args(\phi) \coloneqq \{B_1, \dots, B_m\}.
    \end{equation}
    And given a local reward function $r$ defined over PRVs $A_1, \dots, A_k$, $args(r)$ returns those PRVs:
    \begin{equation}
        args(r) \coloneqq \{A_1,\dots,A_k\}.
    \end{equation}
\end{definition}

Using these shorthand notations, we can give a formal definition of the relational cost graph:
\begin{definition}[Formal Definition of Relational Cost Graph]
    Given an rfMDP with a set $\mathbf{R}$ of parameterized local reward functions, a set $\mathbf{\Phi}$ of factors, and a set $\mathbf{X}$ of PRVs, the \emph{relational cost graph} is a graph $rcg = (V,E)$ with $V \coloneqq \mathbf{X}$ and the set $E$ of edges given by
    \begin{align}
        E \coloneqq \{ (A,B) &\in \mathbf{X} \times \mathbf{X} \mid \left( args(A) \cap args(B) \neq \emptyset \right) \\
        &\land \left( \exists \phi \in \mathbf{\Phi}: a \in args(\phi) \land b \in args(\phi) \right) \\
        &\land \left( \exists r \in \mathbf{R}: a \in args(r) \land b \in args(r) \right)\}.
    \end{align}
\end{definition}

\subsection{Examples for the Relational Cost Graph} \label{appendix:rcg-examples}
We illustrate the relational cost graph and later the state representation on a bit more complex setting:

\begin{example} \label{ex:relational_cost_graph}
Consider the PRVs $Sick(M)$ and $RemoteWork(M)$ and assume we have a parfactor defined over these two PRVs as well as the PRV $Sick'(M)$ for the next state.
Then, the relational cost graph consists of two vertices $Sick(M)$ and $RemoteWork(M)$ with an edge between them.
Figure~\ref{fig:sick_rewo_rcg} shows the graph visually.
Furthermore, we can understand why these two PRVs need to be counted together: A person is (not) sick in the next state dependent on that person being (not) sick and (not) working remote in the current state.
We need both values for the correct transition probability.
\end{example}

\begin{figure}[h]
    \centering
    \scalebox{0.7}{
    \begin{tikzpicture}[baseline=(current bounding box.center),
                        nodesize/.style={minimum size=1.6cm}]
        \node[node, nodesize] (s) {$Sick(M)$};
        \node[node, nodesize] (r) [right= of s] {$ReWo(M)$};
    
        \draw[-] (s) -- (r);
    \end{tikzpicture}
    }
    \caption{Visual representation of the relational cost graph in Example~\ref{ex:relational_cost_graph}. We abbreviate $RemoteWork(M)$ by $ReWo(M)$.}
    \label{fig:sick_rewo_rcg}
    \Description{The relational cost graph consists of two vertices Sick(M) and RemoteWork(M) with an edge between them.}
\end{figure}

We now apply the definition of a CRV to Example~\ref{ex:relational_cost_graph}.
\begin{example}
The CRV for the clique consisting of $Sick(M)$ and $RemoteWork(M)$ has the structure $\#[Sick(M),RemoteWork(M)]$ and a histogram for that CRV would consist of the four entries $n_{tt}$ for the number of people being sick and working remote, $n_{tf}$ for the number of people sick but not working remote, $n_{ft}$ for the number of people not sick but working remote, and $n_{ff}$ for the number of people not sick and not working remote.
\end{example}

We also provide an example for using our CRVs for counting PRVs defined over different parameters, where regular Counting Random Variables cannot be used:
\begin{example}
Suppose we have a parfactor defined over PRVs $Sick(X),Friends(X,Y),Sick(Y)$. The respective CRV has structure $\#[Sick(X),Friends(X,Y),Sick(Y)]$ and we count for each (joint) assignment to the groundings of the PRVs how often this assignment occurs.
Assume that $X$ has a domain with only one person $x$ and $Y$ with three, $y_1$ to $y_3$.
Now further assume that $x$ and $y_1$ are sick and $x$ is friends with everyone except $y_2$.
In other words, $x$ is friends with one sick and one healthy person and not friends with a sick person.
Thus, the entries in the histogram for $ttt$, $ttf$ and $tft$ are one, while all other ones are zero.
\end{example}

\subsection{Proof of Correct State Representation} \label{appendix:state-representation}

\begin{theorem*}
The representation in Definition~\ref{def:state-representation} is correct.
\end{theorem*}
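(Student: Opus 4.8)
The plan is to establish a two-way correspondence between the ground state space $\mathbf{S} = \mathcal{I}_\mathbf{X}$ and the state space $(S_i)_i$ of Definition~\ref{def:state-representation}, and then to argue that this correspondence preserves everything the rfMDP actually refers to, namely the groundings of each parfactor and of each parameterized local reward function. Write $K_1,\dots,K_c$ for the maximal cliques of the relational cost graph (Definition~\ref{def:relational_cost_graph}) and let $S_j$ be the CRV attached to $K_j$, or the RV itself when $K_j$ is a single propositional vertex. For the \emph{forward} direction I would define the abstraction map $\Psi$ sending a ground state $s \in \mathcal{I}_\mathbf{X}$ to the tuple whose $j$-th entry is the histogram that counts, over all admissible groundings of the logvars occurring in $K_j$, how many produce each joint truth-value assignment to the PRVs of $K_j$. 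The routine checks here are that each such histogram distributes $n_j$ elements into $\prod \abs{range(\cdot)}$ buckets and is hence a legal value of $S_j$ in the sense of Definition~\ref{def:extended_crv}, so that $\Psi$ is well defined and lands inside $(S_i)_i$; this shows that no ground state is lost.

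For the \emph{backward} direction I would show that every state of the representation is realized by some ground state, i.e.\ that no spurious state is introduced. Given a legal tuple of histograms $(h_j)_j$, I reconstruct a witnessing ground state parfactor by parfactor: for each parfactor $g$ (resp.\ each parameterized local reward function $R_i$), the PRVs of $g$ that pairwise share a logvar form a clique of the relational cost graph and therefore sit inside some maximal clique $K_j$, so the marginal of $h_j$ on those PRVs prescribes exactly how to distribute them over the relevant groundings; PRVs of $g$ that do not pairwise share a logvar are counted by distinct CRVs whose product over groundings is the required Cartesian product of pairings, and propositional PRVs are stored verbatim. When a PRV lies in several cliques, realizability forces the shared marginals of the corresponding histograms to agree, and this is exactly the condition satisfied by tuples in the image of $\Psi$. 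Since the constants inside a clique are interchangeable, any two reconstructions differ only by a permutation of $\mathbf{D}$ within the domains of the logvars; by the standing assumption that all functions of the rfMDP are oblivious to object identity, such a permutation leaves the transition function $T$ and the reward $R$ invariant. Hence the fibers of $\Psi$ are precisely the orbits of the object-permutation symmetry, and $\Psi$ descends to a bijection between those orbits and the states of Definition~\ref{def:state-representation}.

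Finally I would close the loop by checking that the abstraction is \emph{faithful} for the quantities the planner evaluates: for each parfactor $g$ and each $R_i$, the contribution of $g$ to $P(s' \mid s,a)$ in Equation~\ref{eq:lp} and the contribution of $R_i$ to $R(s)$ depend on $s$ only through $\Psi(s)$. This is where the two conjuncts of the edge rule of Definition~\ref{def:relational_cost_graph} are used together: any two PRVs of $g$ whose joint statistics over groundings are needed both occur in $g$ and share a logvar, hence lie in a common clique and are jointly represented; any pair that is \emph{not} jointly represented either never appears together in a parfactor or reward, or is indexed over independent logvars, in which case only the separate marginals (already stored as separate CRVs) matter. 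Combining the three steps yields that the representation is a lossless encoding of $\mathbf{S}$ modulo symmetry, which establishes the theorem.

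\textbf{Main obstacle.} I expect the delicate point to be the backward direction, specifically the claim that \emph{per-clique} counting is genuinely sufficient: that no cross-clique correlation present in a ground state can ever influence the groundings of a parfactor or a local reward. Getting this right requires using ``co-occur in a parfactor'' to discard irrelevant PRV pairs and ``share a logvar'' to reduce the remaining ones to products of marginals, and then handling the bookkeeping of multi-logvar CRVs and their constraints carefully; this accounting, rather than any conceptual difficulty, is the part that needs the most care.
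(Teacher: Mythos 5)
Your proposal follows essentially the same route as the paper's own proof: fix a parfactor, split off its input PRVs, obtain the histograms by counting the ground assignments (your map $\Psi$), and recover a ground state from a histogram tuple by reconstructing groundings parfactor by parfactor, freely combining groundings of PRVs that land in different CRVs because they share no logvar. The additional material you supply---describing the fibers of $\Psi$ as object-permutation orbits and explicitly checking faithfulness of the transition and reward contributions---elaborates on, but does not depart from, the paper's terse two-direction argument.
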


\begin{proof}
We first lay the foundation for the proof and then continue to show both directions of transformation.

By the definition of a parfactor model $G=\{g_i\}$, the full joint probability distribution is
\begin{align}
    P_G &= \frac{1}{Z} \prod_{f \in gr(G)} f \\
    &= \frac{1}{Z} \prod_{g \in G} \prod_{f \in gr(g)} f,
\end{align}
with $gr(g)$ referring to the groundings of a parfactor $g$.
Without loss of generality, we fix a parfactor $g$.
We split the PRVs $\mathcal{B}$ the factor $g$ is defined over into two sets:
The set $\mathbf{B}_{in}$ of \emph{input} PRVs, which contribute to the current state, and the set $\mathbf{B}_{out}$ of \emph{output} PRVs, which contribute to the next state.
We can ignore the set $\mathbf{B}_{out}$, because we only represent the current state and iterate over the next state later in Foreplan.

Given groundings for $g$, we count the occurrences of each assignment to the PRVs in $\mathbf{B}_{in}$.
We split the counts into different CRVs according to Definition~\ref{def:state-representation}.

Given a state in CRVs, we have to instantiate all possible groundings respecting the current state representation.
The PRVs in $\mathbf{B}_{in}$ are possibly split across different CRVs.
We can combine any grounding of the PRVs in the different sets, as these PRVs do not share a parameter.
\end{proof}

\subsection{Calculating Transition Probabilities} \label{appendix:transition_probabilities}
In this section, we show how Foreplan calculates the transition probabilities for the constraints in the linear program in Equation~\ref{eq:lp}.
For each state and action combination, Foreplan generates one constraint.
Within this constraint, a sum is taken over all future states.
We show how to calculate the required $P(s' \mid s,a)$ for given state $s$, action $a$ and next state $s'$.
We assume that we have one parfactor per PRV in the next state.

Since the transition function is factored, Foreplan calculates the probability of each PRV in the next state separately.
Thus, we only need to describe how Foreplan calculates $P(s_j' \mid s,a)$ for a given current state, given action and given parfactor for state variable $s_j'$ in the next state.
In short, Foreplan first computes the state representation for the current and next state zoomed in at the given parfactor and second iterates over all possible transitions, summing their probabilities.
We describe the two steps in more detail.

First, Foreplan computes the state representation only for the input PRVs (including the action PRV) of the parfactor from the given state representation and for the output PRV.
Depending on the intertwinedness of the parfactors, the computation is just an extraction or summing out unneeded PRVs from the state representation.
At the end of this step, Foreplan has a CRV for the input PRVs and another one for the output CRV for the parfactor.

Let us fix an order the buckets of the input CRV and denote the counts in each bucket by $k_i, i = 1, \dots, n$, where $n$ is the number of buckets.

In the second step, Foreplan iterates over all possible transitions, where one transition specifies transition counts $t_i, i = 1, \dots n$.
The transition count $ 0 \leq t_i \leq k_i$ specifies the number of objects transitioning from bucket $i$ to the true-assignment of the output CRV, $k_i - t_i$ to the false-assignment, respectively.
Since the next state is given, the sum over the $t_i$ must be exactly the number of true-assignments in the output CRV.
For a fixed transition, Foreplan calculates the probability $P_{t_i,k_i}$ of a fixed bucket $i$ by
\begin{equation}
    P_{t_i,k_i} = \binom{k_i}{t_i} \cdot \phi_{i \to true}^{t_i} \cdot \phi_{i \to false}^{k_i-t_i},
\end{equation}
where $\phi_{i \to true}$ denotes the probability for transitioning from bucket $i$ to $true$ and $\phi_{i \to false}$ for the transition to $false$.
Both probabilites can be looked-up in the parfactor.
For a parfactor for the output PRV $s_j$, the computation is thus
\begin{equation} \label{eq:exact_foreplan_transition}
    P(s_j \mid s,a) = \sum_{(t_i)_i} \prod_i P_{t_i,k_i},
\end{equation}
where the sum $\sum_{(t_i)_i}$ goes over all possible transitions.

We illustrate Equation~\ref{eq:exact_foreplan_transition} with our running example:
\begin{example}
We show how to calculate the probability of the PRV $Travel$.
Let us denote the number of persons travelling in the current state by $x$, the number of persons travelling in the next state by $x'$, the number of persons travelling and restricted from travelling by $a_1$ and the number of persons not travelling and restricted from travelling by $a_2$.
Then, the sum in Equation~\ref{eq:exact_foreplan_transition} goes over all $t_1,t_2,t_3,t_4 \geq 0$ with
\begin{align}
    t_1 + t_2 + t_3 + t_4 &= x', \\
    t_1 &\leq a_1, \\
    t_2 &\leq x - a_1, \\
    t_3 &\leq a_2, \\
    t_4 &\leq m - x - a_2,
\end{align}
where $m$ is the number of persons in our example.
We denote by $\phi(travel',travel,restrict)$ the probability of a person travelling in the next state given that person is currently (not) travelling and (not) being restricted from travelling.
Then, the values $P_{t_i,k_i}$ are calculated by
\begin{align}
    P_{t_1,a_1} &= \binom{a_1}{t_1} \cdot \phi(t,t,t)^{t_1} \cdot \phi(f,t,t)^{a_1-t_1} \\
    P_{t_2,x-a_1} &= \binom{x-a_1}{t_2} \cdot \phi(t,t,f)^{t_2} \cdot \phi(f,t,f)^{x-a_1-t_2} \\
    P_{t_3,a_2} &= \binom{a_2}{t_3} \cdot \phi(t,f,t)^{t_3} \cdot \phi(f,f,t)^{a_2-t_3} \\
    P_{t_4,m - x - a_2} &= \binom{m - x - a_2}{t_4} \cdot \phi(t,f,f)^{t_4} \notag \\ &\quad\quad \cdot \phi(f,f,f)^{m - x - a_2 - t_4}.
\end{align}
\end{example}

\subsection{Example of Removing a Maximum Operator} \label{appendix:ve}
Let us briefly illustrate the step of removing the maximum operator.
The removal is a two-phase process. In a first phase, ALP \emph{eliminates} the variables and in a second phase, ALP generates the constraints for the linear program along the elimination sequence.
Suppose we have the function
\begin{equation}
F = \max_{x_1,x_2} f_1(x_1) + f_2(x_1,x_2) + f_3(x_2)
\end{equation}
in a linear program, e.g., $a \geq F$ or $a = F$, where $a \in \mathbb{R}$ or $a$ as a linear program variable.
We start with the first phase.
ALP \emph{eliminates} $x_1$ by replacing $f_1$ and $f_2$ by a new function
\begin{equation}
e_1(x_2) = \max_{x_1} f_1(x_1) + f_2(x_1,x_2).
\end{equation}
ALP eliminates $x_2$ by replacing $e_1$ and $f_3$ by a new function
\begin{equation}
e_2 = \max_{x_2} e_1(x_2) + f_3(x_2).
\end{equation}
Note that $e_2$ has an empty scope and evaluates therefore to a number.
We continue with the second phase, in which ALP translates the elimination sequence into linear program constraints:
In the linear program, ALP adds helping variables and constraints to enforce the maxima in the different terms~\cite{approx_factored_mdps}.
For each function $e$ with domain $Z$, ALP adds the variables $u_z^e$ for each assignment $z$ to $Z$. The variable $u_z^e$ is supposed to yield the value of $e(z)$. For the initial functions $f_i$, in our case $f_1$, $f_2$, $f_3$, ALP simply adds $u_z^{f_i} = f_i(z)$ to the constraints of the linear program.
Suppose we got the function $e(z) = \max_{x} e_i$ when eliminating some variable $x$. ALP then adds the constraints
\begin{equation} \label{eq:max_realization}
u_z^e \geq \sum_i u_{(z,x)}^{e_i} \ \forall z,x.
\end{equation}
For $e_2$, the generated constraints would be
\begin{equation}
u^{e_2} \geq u_{x_2}^{e_1} + u_{x_2}^{f_3}
\end{equation}
for all possible values of $x_2$.
We are interested in keeping the number of added constraints small, which is the aim of VE.

\subsection{Proof of Runtime Complexity of Approximate Foreplan} \label{appendix:approx_foreplan_runtime}
We present the full proof for the runtime complexity of Approximate Foreplan: 
\begin{theorem*}
Approximate Foreplan runs in time polynomial in the number of objects, polynomial in $c$ and exponential in the induced width of each cost network, when $w$ is bound.
\end{theorem*}
\begin{proof}
The runtime of Approximate Foreplan is the runtime of precomputing the backprojections and solving the linear program in Equation~\ref{eq:factored_lp}.

The precomputation involves to calculate the backprojections for each state and action combination.
The backprojections are computed as given in Definition~\ref{def:lifted-backprojection}, which involves another iteration over the state space.
Thus, the complexity of precomputing the backprojections is bounded by the state complexity raised to the power of three.
However, the size of the state space is, in contrast to Foreplan, no longer exponential in $c$, but only polynomial in $c$:
Because of the basis functions, the CRVs of the state representation are not iterated jointly, but independently.
For the first part of Approximate Foreplan, we are left with a runtime complexity polynomial in the state space, which is polynomial in the number of objects and $c$, and exponential in $w$.

The second part is solving the linear program in Equation~\ref{eq:factored_lp}.
Linear programs can be solved in time polynomial w.r.t the number of variables and constraints~\cite{63499}.
Therefore, we count the number of variables and constraints.
Initially, we have one variable per basis function and one constraint per action.
Then, the maximum operator in each action constraint is removed.
The removal process follows the algorithm by \customcite{approx_factored_mdps}, who have shown their results for arbitrary functions and is therefore applicable.
The removal process takes time exponential in the induced width of each cost network~\cite{approx_factored_mdps,dechter1999bucket}.
Therefore, the runtime complexity for solving the linear program is polynomial in the number of actions and exponential in the induced width of each cost network.

The overall runtime complexity is
\begin{enumerate*}[label={(\roman*)}]
\item polynomial in $c$ and the number of objects,
\item exponential in the $w$ and the induced width of each cost network.
\end{enumerate*}
With $w$ bounded, the claim holds.
\end{proof}

\subsection{More Runtime Theorems for Approximate Foreplan} \label{appendix:foreplan-theorems}
We first define the total relational cost graph:
\begin{definition}[Total Relational Cost Graph] \label{def:total_relational_cost_graph}
The \emph{total relational cost graph} for a solution environment for an rfMDP contains a vertex for each (P)RV.
Two vertices are connected by an edge if they occur together in a function or parfactor.
\end{definition}

By definition, the total relational cost graph is a supergraph for all graphs of interest for the runtime complexity:
\begin{theorem} \label{theorem:total_relational_cost_graph_subgraphs}
The following graphs are each subgraphs of the total relational cost graph:
\begin{enumerate}
\item relational cost graph
\item the cost network for each maximum constraint in Approximate Foreplan
\end{enumerate}
\end{theorem}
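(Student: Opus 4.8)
The plan is to prove each of the two containments by checking the two ingredients of ``being a subgraph'': vertex-set inclusion and edge-set inclusion. The organising idea is that the total relational cost graph (Definition~\ref{def:total_relational_cost_graph}) is, by design, the most permissive of the graphs involved — it has a vertex for \emph{every} (P)RV of the solution environment and an edge between any two (P)RVs that co-occur in \emph{any} function or parfactor — whereas each of the other two graphs either restricts the vertex set (to current-state PRVs), or imposes an extra side condition on which co-occurrences count as edges (a shared logvar), or both. So in each case the inclusion should fall out of directly comparing the defining conditions.

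First I would handle item~1, the relational cost graph. Its vertices are exactly the PRVs of the current state (Definition~\ref{def:relational_cost_graph}), which form a subset of all (P)RVs, so the vertex inclusion is immediate. For the edges, Definition~\ref{def:relational_cost_graph} places an edge between two PRVs precisely when they share a logvar \emph{and} occur together in a parfactor or a parameterized local reward function; the second conjunct already witnesses co-occurrence ``in a function or parfactor'', which is exactly the edge condition of Definition~\ref{def:total_relational_cost_graph}. Hence every edge of the relational cost graph is an edge of the total relational cost graph, and item~1 follows.

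Next I would do item~2, the cost network of a maximum constraint. I would fix one such constraint — by Equation~\ref{eq:factored_lp} there is one per action $a$ — and recall that its cost network has a vertex per variable appearing in it and an edge whenever two of those variables co-occur in one of the constraint's constituent functions, namely a local reward function, a basis function $h_i$, or a (lifted) backprojection $G_i^a$. The variables are (P)RVs of the solution environment, giving vertex inclusion; and each constituent function is a function present in the solution environment, so any co-occurrence in it is a co-occurrence ``in a function'' in the sense of Definition~\ref{def:total_relational_cost_graph}, giving edge inclusion. Since the argument does not depend on which constraint was fixed, it yields the claim for the cost network of every maximum constraint.

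The step I expect to require the most care is pinning down, in item~2, that the backprojections $G_i^a$ (and the propositional $g_i^a$) are covered by the word ``function'' in Definition~\ref{def:total_relational_cost_graph}: their scope is not one of the rfMDP's parfactors or reward functions but the set of current-state and action (P)RVs that, through the transition parfactors, influence the next-state (P)RVs in the scope of $h_i$. The cleanest fix is to read Definition~\ref{def:total_relational_cost_graph} as quantifying over every function actually instantiated in the linear programs that (Approximate) Foreplan builds — which includes the backprojections by construction — so that the inclusion is direct; if one prefers the narrower reading, one instead observes that $\mathrm{scope}(G_i^a)$ is contained in the union of $\mathrm{scope}(h_i)$ with the scopes of the transition parfactors for the next-state variables in $\mathrm{scope}(h_i)$, and runs the edge argument over that union — more bookkeeping, but no new idea.
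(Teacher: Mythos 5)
Your proposal is correct and follows essentially the same route as the paper's proof: for each of the two graphs you compare the defining vertex and edge conditions directly with Definition~\ref{def:total_relational_cost_graph}, observing that the total relational cost graph drops the shared-logvar requirement (so it only gains edges) and that co-occurrence in a constraint's constituent functions is co-occurrence ``in a function.'' Your extra remark about reading ``function'' so as to include the backprojections $G_i^a$ is a point the paper's proof glosses over, but it does not change the argument.
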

\begin{proof}
For both cases, we show that the total relational cost graph contains \emph{more} vertices and edges than the respective definition requires. Thus, we can remove the superfluos vertices and edges to arrive at the respective subgraph.

We start with the relational cost graph.
By Definition~\ref{def:relational_cost_graph}, the relational cost graph has a vertex for each PRV and an edge between two PRVs if they share a logvar and occur together in a parfactor, a parameterized local reward function, or a basis function.
In particular, all these vertices and edges are introduced in the total relational cost graph too, when we connect two vertices once the corresponding PRVs occur together in a parfactor. We may add more edges than for the relational cost graph as we ignore the \emph{share a logvar} condition.

We continue with the cost network for each maximum constraint.
Let us take an arbitrary maximum constraint.
The cost network consists of vertices for each variable appearing in the constraint and connects two vertices with an edge if the corresponding variables appear together in the same function.
Thus, the total relational cost graph contains these edges too, as the respective variables occur together in a function.
\end{proof}

Since the induced width is the same as the treewidth minus one, we can give some more bounds:
\begin{theorem}
If the total relational cost graph for an rfMDP has bounded treewidth, Approximate Foreplan runs in time polynomial in the number of objects of the rfMDP.
\end{theorem}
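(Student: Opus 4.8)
The plan is to reduce the statement to Theorem~\ref{theorem:approximate:relational_cost_network_bounded_clique} by showing that a bounded treewidth of the total relational cost graph forces both the maximal clique size $w$ of the relational cost graph and the induced width of every cost network arising in Approximate Foreplan to be bounded by a constant that is independent of the number of objects.

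First I would invoke Theorem~\ref{theorem:total_relational_cost_graph_subgraphs}: the relational cost graph and the cost network of every maximum constraint in Approximate Foreplan are subgraphs of the total relational cost graph. Since treewidth is monotone under taking subgraphs, a bound $k$ on the treewidth of the total relational cost graph transfers to each of these subgraphs. Next I would translate this bound into the two parameters that occur in the exponent of the runtime of Approximate Foreplan. For the relational cost graph, the clique number of any graph is at most its treewidth plus one, hence $w \le k+1$, i.e.\ $w$ is bounded. For the cost networks, I would use the standard fact that the treewidth equals the minimum, over all elimination orders, of the induced width (with the paper's convention, induced width $=$ treewidth $-1$ along a width-optimal order); choosing such an order for the variable elimination that removes the maximum operators, the induced width of every cost network is at most $k-1$, hence bounded. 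I would also note that $c$, the number of maximal cliques of the relational cost graph, is bounded by the number of PRVs and is therefore independent of the number of objects, and that by Theorem~\ref{theorem:approximate:relational_cost_network_bounded_clique} the dependence on $c$ is only polynomial, so it contributes no superpolynomial factor in the number of objects.

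Finally I would plug these bounds into Theorem~\ref{theorem:approximate:relational_cost_network_bounded_clique}: with $w$ bounded and the induced width of each cost network bounded, that theorem yields that Approximate Foreplan runs in time polynomial in the number of objects, since the remaining factors are polynomial in $c$ and exponential only in the now-constant $w$ and induced widths.

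The step I expect to be the main obstacle is the care needed around the elimination order. Theorem~\ref{theorem:approximate:relational_cost_network_bounded_clique} is phrased in terms of ``the induced width of each cost network'', which is order-dependent, so the argument must make explicit that Approximate Foreplan may and should choose an elimination order witnessing the treewidth, and that the cost networks referenced there are precisely the subgraphs covered by Theorem~\ref{theorem:total_relational_cost_graph_subgraphs}. Beyond that, the proof is a routine combination of the monotonicity of treewidth under subgraphs with the clique-number and induced-width bounds.
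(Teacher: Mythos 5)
Your proposal is correct and follows essentially the same route as the paper: apply Theorem~\ref{theorem:total_relational_cost_graph_subgraphs} together with monotonicity of treewidth under subgraphs, bound the clique number $w$ of the relational cost graph and the induced width of each cost network by the treewidth, and then conclude via Theorem~\ref{theorem:approximate:relational_cost_network_bounded_clique}. Your added remark about choosing an elimination order that witnesses the treewidth is a fair point of care that the paper's proof glosses over (it simply equates induced width with treewidth minus one), but it does not change the argument.
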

\begin{proof}

By Theorem~\ref{theorem:total_relational_cost_graph_subgraphs}, the relational cost graph is a subgraph of the total relational cost graph.
As a subgraph can have the treewidth of the supergraph at most~\cite{db122a323d0740a3b2b1580529f310e7}, the treewidth of the relational cost graph is bounded.
And if a graph has bounded treewidth, it also has bounded clique number, which is our $w$~\cite{db122a323d0740a3b2b1580529f310e7}.
Thus, Theorem~\ref{theorem:approximate:relational_cost_network_bounded_clique} follows, leaving only the induced width of each cost network.

By Theorem~\ref{theorem:total_relational_cost_graph_subgraphs}, each cost network is a subgraph of the total relational cost graph.
The treewidth of each cost network is bounded by the treewidth of the total relational cost graph.
As the induced width equals the treewidth minus one, it is bound, leaving no variable with exponential influence.
\end{proof}

\subsection{Proof of Approximation Guarantee} \label{appendix:foreplan-approximation}

We now provide the proof that Approximate Foreplan and ALP yield the same results on rfMDPs:
\begin{theorem*}
Given  an rfMDP $R$, Approximate Foreplan and ALP are equivalent on $R$ and the grounded version of $R$, respectively.
\end{theorem*}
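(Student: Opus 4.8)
The plan is to establish the equivalence by a direct correspondence between the linear program Approximate Foreplan builds (Equation~\ref{eq:factored_lp} instantiated on the rfMDP) and the one ALP builds on $gr(R)$, showing that the two programs have the same optimal solution. I would first fix notation: let $R$ be an rfMDP, let $gr(R)$ be its grounding as a factored MDP, and let the basis functions of Approximate Foreplan be the parameterized basis functions $h_i$ of Example~\ref{ex:lifted_basis_function}, whose groundings $\{h_{i,j}\}_j$ are exactly the basis functions used by ALP on $gr(R)$. The argument then splits into three parts, matching the three sentences of the proof sketch: the objective function, the individual constraints, and the coverage of the action space.

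For the objective, I would show that choosing the lifted coefficients $\alpha_i = \sum_j \alpha_{i,j}$ (summing the ALP state-relevance weights over all groundings of a parameterized basis function) makes $\sum_i \alpha_i w_i$ identical to $\sum_{i,j} \alpha_{i,j} w_{i,j}$ under the constraint that all groundings of one parameterized basis function share a single weight $w_i = w_{i,j}$. This is where I would invoke indistinguishability: since the groundings of $h_i$ are symmetric in the objects and enter the transition model symmetrically, an optimal ALP solution can be taken to assign them a common weight, so restricting to tied weights loses nothing. For the constraints, the key computation is that the lifted backprojection $G_i^a(x)$ of Definition~\ref{def:lifted-backprojection} evaluates, for any state $x$ and action $a$, to the same number as $\sum_j g_{i,j}^{\tilde a}(\tilde x)$ summed over the groundings $\tilde x,\tilde a$ consistent with $(x,a)$; this follows from Theorem~\ref{theorem:state-representation-correctness}, which guarantees the histogram representation $x$ determines the multiset of propositional assignments exactly, plus the factored form of $P(x'\mid x,a)$. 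Likewise $h_i(x) = \sum_j h_{i,j}(\tilde x)$ by the definition of parameterized local reward / basis functions as sums over groundings, and $R(x)$ decomposes the same way. Hence the quantity inside $\max_x$ in Equation~\ref{eq:factored_lp} equals the ALP expression inside $\max_{\tilde x}$, and the maxima coincide because the state representation ranges over exactly $\mathbf{S}$ (Theorem~\ref{theorem:state-representation-correctness} again). Finally, for the action space I would cite the action representation of Definition~\ref{def:parameterized_action} together with Theorem~\ref{theorem:state-representation-size-bounded-by-rcg}: every propositional action of $gr(R)$ corresponds to exactly one lifted action $a$ and vice versa, so the constraint sets $\{a \in A\}$ and $\{\tilde a\}$ generate the same family of constraints (after the symmetry-based identification, Approximate Foreplan generates one representative constraint per orbit, which is equivalent since the orbit members are identical constraints).

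Putting these together, the feasible region of Approximate Foreplan's LP, under the weight-tying identification, is exactly the image of ALP's feasible region, and the objectives agree, so the optimal solutions correspond; expanding $V \approx \sum_i w_i h_i$ back to the ground level yields the same value-function approximation, which is what it means for the two to be equivalent. I expect the main obstacle to be the symmetry argument for the objective and the constraint multiplicities: one must argue carefully that tying the weights of symmetric basis functions is without loss of optimality for ALP (this needs that $gr(R)$ is itself symmetric under the object permutations, which follows from the rfMDP semantics but deserves an explicit lemma), and that collapsing each orbit of ground constraints to a single lifted constraint neither adds nor removes feasible points. Everything else — the backprojection identity and the reward decomposition — is a routine unfolding of the definitions once Theorem~\ref{theorem:state-representation-correctness} is in hand.
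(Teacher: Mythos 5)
Your proposal is correct and follows essentially the same route as the paper's own proof in Appendix~\ref{appendix:foreplan-approximation}: the same three-part decomposition (lifted basis functions and backprojections equal the sums of their groundings, hence each constraint matches; the objective matches under the coefficient identification $\alpha_i = n_i\alpha'_i$, which is your $\alpha_i=\sum_j\alpha_{i,j}$ with uniform ground weights; and the lifted state/action representation covers the ground spaces via Theorem~\ref{theorem:state-representation-correctness}). The only difference is that you make explicit the weight-tying and orbit-collapsing symmetry arguments that the paper's proof treats informally, which is a refinement rather than a different approach.
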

\begin{proof}
We first prove that the basis functions and backprojections evaluate to the same terms. Then, we investigate the setup of the linear programs.

The lifted basis functions accumulate multiple grounded ones.
Evaluating a lifted basis function yields the same result as summing the grounded basis functions. 
For the backprojections, the case is very similar.
Evaluating the lifted backprojections as in Definition~\ref{def:lifted-backprojection} yields the same result as summing all grounded backprojections, because the objects grouped through PRVs are indistinguishable and share the same transition probabilities. 
Since we sum over all backprojections and basis functions in the linear program, the equivalence of the sums suffices.

For the linear program, we start with the objective function and continue with the constraints.
The objective function used in Approximate Foreplan groups multiple grounded basis functions, which are used in ALP, together in one lifted basis function. Therefore, if we denote the weights in ALP by $\alpha'_i$, we have the weights $\alpha_i = n_i \cdot \alpha'_i$ for Foreplan, where $n_i$ stands for the number of grouped basis functions.
Since the grouped basis functions are indistinguishable, the weights used in Approximate Foreplan are evenly distributed in ALP.
Next, we have the constraints.
Since the backprojections and basis functions in Approximate Foreplan evaluate to the same terms as the grounded functions in ALP, each individual constraint is correct.
Furthermore, by Theorem~\ref{theorem:state-representation-correctness}, Approximate Foreplan covers the whole action space.
\end{proof}

\subsection{Evaluation} \label{appendix:evaluation}

Figure~\ref{fig:sysadmin_total_runtime_appendix} shows the runtime of (Approximate) Foreplan, ALP and XADD Symbolic Value Iteration (VI) on the fully-connected SysAdmin~\cite{approx_factored_mdps} example with a time limit of two hours.
Both versions of Foreplan clearly surpass ALP and XADD Symbolic VI.

For the BoxWorld~\cite{boutilier2001symbolic} example, we use trains instead of trucks and increase the number of boxes and trains evenly and simultaneously.
Figure~\ref{fig:boxworld_total_runtime_appendix} shows the runtime of (Approximate) Foreplan, ALP and XADD Symbolic VI on the BoxWorld example with a time limit of 15 hours.
Foreplan is more than $83.227$ times faster than the exact XADD Symbolic VI, which times out after two boxes and trains.
Foreplan is at nine boxes and trains even more than three times faster than ALP.
For approximation, Approximate Foreplan is more than $2.057$ times faster than ALP at nine boxes and trains.
Moreover, Approximate Foreplan manages to go up to 30 boxes and trains within the same time limit.

\begin{figure}[tb]
	\centering
	\includegraphics[width=\linewidth]{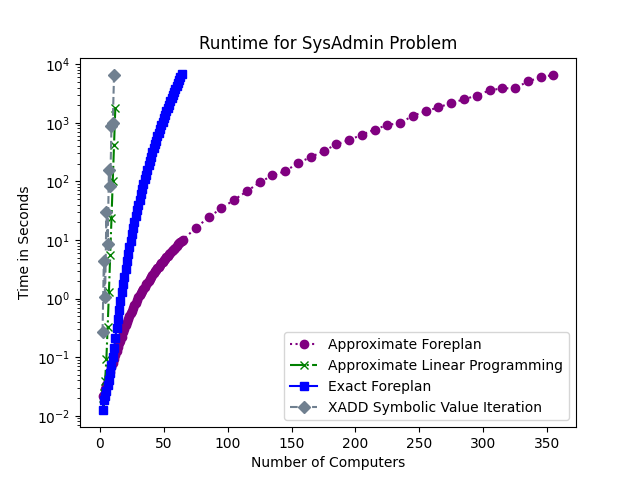}
	\caption{Runtime (logscale) of (Approximate) Foreplan, ALP and XADD Symbolic VI on the SysAdmin example with a time limit of two hours. Only runs within these limits are shown. For Approximate Foreplan, we have chosen a bigger step size when no other algorithm was left for the number of computers.}
	\label{fig:sysadmin_total_runtime_appendix}
    
    \Description{The figure shows the runtime results of Foreplan, Approximate Foreplan, ALP and XADD Symbolic VI on the SysAdmin example.}
\end{figure}

\begin{figure}[tb]
	\centering
	\includegraphics[width=\linewidth]{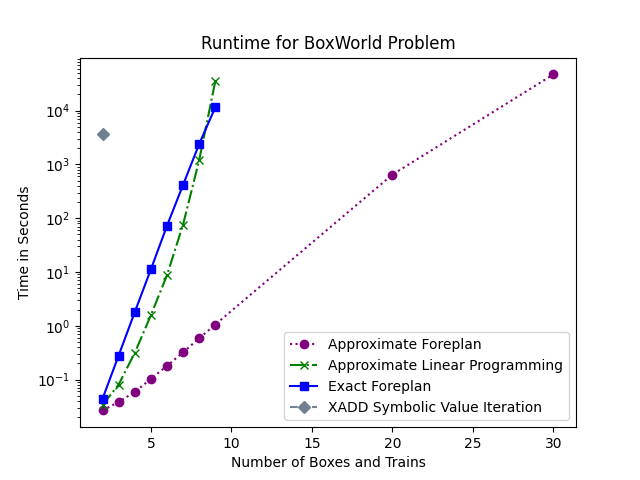}
	\caption{Runtime (logscale) of (Approximate) Foreplan, ALP and XADD Symbolic VI on the BoxWorld example with a time limit of 15 hours. Only runs within these limits are shown. For Approximate Foreplan, we have chosen a bigger step size when no other algorithm was left for the number of boxes and trains.}
	\label{fig:boxworld_total_runtime_appendix}
    \Description{The figure shows the runtime results of Foreplan, Approximate Foreplan, ALP and XADD Symbolic VI on the BoxWorld example.}
\end{figure}

The approximation error in the BoxWorld example is more complex.
With our chosen basis functions, ALP and Approximate Foreplan both deviate in up to $85$ \% of all actions, when tested on up to eight boxes and trains.
Still, Approximate Foreplan is not worse than ALP, since both report the same coefficients for the basis functions, empirically and theoretically by Theorem~\ref{theorem:equivalence_alp_aforeplan}.
The reason lies in the reward and basis functions:
The reward functions award points only for boxes in cities.
Because we have chosen the basis functions identical to the reward functions and an additional constant basis function, the basis functions do not take boxes on trains into account, where the boxes may be unloaded in a more profitable city.
Therefore, with better basis functions, we should be able to capture better solutions.
Moreover, we highlight that Approximate Foreplan returns the same policy as ALP and is more than $30.000$ times faster.

\section{Walkthrough of Approximate Foreplan} \label{apx:walkthrough}
We describe in this section how we can solve Example~\ref{ex:epidemic} with Approximate Foreplan.
We first model the example formally and find the state representation.
Afterwards, we precompute the backprojections and instantiate the linear program.

\subsection{Modeling the Small Town}
We model the setting in Example~\ref{ex:epidemic}.
The town population consists of three people.
We refer to Figure~\ref{fig:epidemic_lifted} for the transition model.
We define the parfactors $f_1$ and $f_2$ according to Tables~\ref{apx:table:epidemic_travel} and~\ref{apx:table:epidemic_sick}.
The parfactor $f_3$ sets the probability of an epidemic in the next state to $\frac{s+1}{5}$, where $s$ is the number of sick persons in the current state.
\begin{table}[bp]
	\centering
    \caption{Transition probabilities for each person giving the probability of travelling in the next timestep given whether the person is currently travelling and being restricted.}
	\label{apx:table:epidemic_travel}
	\begin{tabular}{cc|c}
		$Travel(M)$ & $Restrict(M)$ & $Travel'(M)$ \\ 
		\hline
		0 & 0 & 0.2 \\
		0 & 1 & 0.1 \\
		1 & 0 & 0.9 \\
		1 & 1 & 0.5 \\
	\end{tabular}
\end{table}
\begin{table}[bp]
	\centering
    \caption{Transition probabilities for each person giving the probability of being sick in the next timestep given whether the person is currently sick and there is an epidemic.}
	\label{apx:table:epidemic_sick}
	\begin{tabular}{cc|c}
		$Sick(M)$ & $Epidemic$ & $Sick'(M)$ \\ 
		\hline
		0 & 0 & 0.2 \\
		0 & 1 & 0.8 \\
		1 & 0 & 0.4 \\
		1 & 1 & 0.6 \\
	\end{tabular}
\end{table}
The mayor can restrict an arbitrary subset of the towns population from travelling.
The local reward functions are $R_1(Sick(M))$ and $R_2(Travel(M))$, each for each person of the towns population.
The function $R_1(Sick(M))$ evaluates to $1$ if the person is not sick and to $-1$ if the person is sick (c.f. Example~\ref{ex:lifted_reward}):
\begin{equation}
R_1(Sick(M)) = \begin{cases}
-1, & Sick(M) \\
1, & \lnot Sick(M)
\end{cases}
\end{equation}
The function $R_2(Travel(M))$ evaluates to $2$ if the person is travelling and to $0$ otherwise:
\begin{equation}
R_2(Travel(M)) = \begin{cases}
2, & Travel(M) \\
0, & \lnot Travel(M)
\end{cases}
\end{equation}
We set the discount factor $\gamma = 0.9$.
For choosing the basis functions, we go with the ones in Example~\ref{ex:lifted_basis_function}, that is, $h_0 = 1$, $h_1 = R_1$ and $h_2 = R_2$.

\subsection{State Space Representation}
For using Approximate Foreplan, we first build the relational cost graph to find the state space representation.
Afterwards, we can modify the actions to be compatible with the state space representation.
The relational cost graph, depicted in Figure~\ref{fig:epidemic_rcg}, contains two vertices, $Sick(M)$ and $Travel(M)$, because these are the only PRVs in the example.
The relational cost graph does not contain any edge, because the two PRVs do not occur together in a parfactor, reward or basis function.
Thus, the vertices $Sick(M)$ and $Travel(M)$ each are a clique of size one.
Therefore, the state space representation contains two histograms along the value of the propositional random variable $Epidemic$. More formally, the state space representation is $([Sick(M)],[Travel(M)],Epidemic)$ (c.f. Example~\ref{ex:state_representation}).

For the action $Restrict(M)$, the mayor no longer needs to specify on which person(s) she applies the travel ban(s) on, but has to define how many persons currently (not) travelling receive a travel restriction. Thus, the mayor needs to define the action histogram $[Travel(M),Restrict(M)]$ giving the numbers of people (not) travelling that are (not) restricted from travelling (c.f. Example~\ref{ex:action_with_state}).

\subsection{Computing Backprojections} \label{apx:walkthrough-backprojections}
The first step of Foreplan is the precomputation of the backprojections.
The backprojections are generally defined as
\begin{equation}
g_i^a(x) = \sum_{x'} P(x' \mid x,a) \cdot h_i(x'),
\end{equation}
where $x'$ are the parameters of $h_i$, $x$ are the parents of $x$ in the transition model and $a$ is the selected action.
For abbreviation, we use $S(M)$ for $Sick(M)$, $R(M)$ for $Restrict(M)$, $T(M)$ for $Travel(M)$ and $Epi$ for $Epidemic$ in this subsection.
We start with the backprojection of $h_0 = 1$:
\begin{equation}
g_0^a(x) = \sum_{x'} P(x' \mid x,a) \cdot h_0(x) = 1 \cdot \sum_{x'} P(x' \mid x,a) = 1.
\end{equation}
Next, we backproject $h_1(Sick(M))$. Note that $Sick'(M)$ is independent of the action in the transition model. Thus, the backprojection is independent of $a$:
\begin{equation}
\begin{split}
g_1(S(M),&Epi) = \\
&\sum_{x' \in \{t,f\}} P(S'(M) = x' \mid S(M), epi) \cdot h_1(x')
\end{split}
\end{equation}
We plug the probabilities from Table~\ref{apx:table:epidemic_sick} in and receive
\begin{alignat}{4}
&g_1(true, true) &&= 0.6 \cdot (-1) + 0.4 \cdot 1 &&= -&&0.2, \\
&g_1(true, false) &&= 0.4 \cdot (-1) + 0.6 \cdot 1 &&= &&0.2, \\
&g_1(false, true) &&= 0.8 \cdot (-1) + 0.2 \cdot 1 &&= -&&0.6, \\
&g_1(false, false) &&= 0.2 \cdot (-1) + 0.8 \cdot 1 &&= &&0.6.
\end{alignat}
The backprojection of $h_2$ is a bit more complex, as it includes the action.
The general backprojection of $h_2$ is given by
\begin{equation}
\begin{split}
g_2^{R(M)}&(T(M)) =\\
&\sum_{x' \in \{t,f\}} P(T'(M) = x' \mid T(M),R(M)) \cdot h_2(x').
\end{split}
\end{equation}
We distinguish the two cases $R(M)=false$ and $R(M)=true$.
We start with the first one, which leads to:
\begin{equation}
\begin{aligned}
g_2^{f}(t) &= \sum_{x' \in \{t,f\}} P(T'(M)=x' \mid T(M)=t,R(M)=f) \\
&= 0.9 \cdot h_2(t) + 0.1 \cdot h_2(f) = 0.9 \cdot 2 \\
&= 1.8
\end{aligned}
\end{equation}
and
\begin{equation}
\begin{aligned}
g_2^{f}(f) &= \sum_{x' \in \{t,f\}} P(t'(P)=x' \mid T(M)=f,R(M)=f) \\
&= 0.2 \cdot h_2(t) + 0.8 \cdot h_2(f) = 0.2 \cdot 2 \\
&= 0.4.
\end{aligned}
\end{equation}
We continue with $R(M)=true$, leading to
\begin{equation}
\begin{aligned}
g_2^{t}(t) &= \sum_{x' \in \{t,f\}} P(T'(M)=x' \mid T(M)=t,R(M)=t) \\
&= 0.5 \cdot h_2(t) + 0.5 \cdot h_2(f) = 0.5 \cdot 2 \\
&= 1
\end{aligned}
\end{equation}
and
\begin{equation}
\begin{aligned}
g_2^{t}(f) &= \sum_{x' \in \{t,f\}} P(T'(M)=x' \mid T(M)=f,R(M)=t) \\
&= 0.1 \cdot h_2(t) + 0.9 \cdot h_2(f) = 0.1 \cdot 2 \\
&= 0.2.
\end{aligned}
\end{equation}

\subsection{Instantiation of Linear Program} \label{apx:walkthrough-lp_instantiation}
The second step of Approximate Foreplan is to instantiate the linear program.
The linear program has three variables $w_1$, $w_2$ and $w_3$.
The objective function is to minimize $\sum_{i=1}^3 \alpha_i w_i$, with $\alpha_i$ being the relevance weights.
We have one constraint for each possible action $a$:
\begin{equation} \label{apx:lp:constraint}
\begin{aligned}
0 \geq &\max_{x_1,x_2,x_3} \{ 3 - 2 \cdot x_1 + 2 \cdot x_2 + w_0 \cdot (0.9 \cdot 1 - 1) \\
&+ w_1 \cdot (0.9 \cdot G_1(x_1,x_3) - H_1(x_1)) \\
&+ w_2 \cdot (0.9 \cdot G_2^a(x_2) - H_2(x_2)) \},
\end{aligned}
\end{equation}
where $x_1,x_2 \in \{0,1,2,3\}$ specify the number of persons being sick and travelling, respectively.
The variable $x_3$ is Boolean and stores the truth value of $Epidemic$.
With $G_i$ and $H_i$ we denote the lifted computation of $g_i$ and $h_i$. In the following, we write the lifted computation in terms of $x_i$ and $g_i$ or $h_i$.
In the remainder of this subsection, we show the complete constraint generation for the linear program for one example action. We omit all other actions because of limited insights compared to one example instantiation and constraint generation.
We show the constraint generation for the action of restricting nobody.
We start with writing the constraint tailored to this setting and directly computing the value of basis functions and their backprojections in a lifted way:
\begin{equation}
\begin{aligned}
0 \geq &\max_{x_1,x_2,x_3} \{3 - 2x_1 + 2x_2 - 0.1 w_0 \\
&+ w_1 \cdot (x_1 \cdot 0.9 \cdot g_1(t,x_3) - x_1 \cdot h_1(t)) \\
&+ w_1 \cdot ((3-x_1) \cdot 0.9 \cdot g_1(f,x_3) - (3-x_1) \cdot h_1(f)) \\
&+ w_2 \cdot (x_2 \cdot 0.9 \cdot g_2^f(t) - x_2 \cdot h_2(t)) \\
&+ w_2 \cdot ((3-x_2) \cdot 0.9 \cdot g_2^f(f) - (3-x_2) \cdot h_2(f))\}.
\end{aligned}
\end{equation}
We continue with introducing functions to save steps later in variable elimination and constraint generation. We define four functions with different parameters, each collecting the respective terms:
\begin{equation}
0 \geq \max_{x_1,x_2,x_3} \{f_1(x_1) + f_2(x_2) + f_3(x_1,x_3) + f_4\},
\end{equation}
with
\begin{align}
f_1(x_1) &= 3 - 2x_1 - w_1x_1h_1(t) - w_1 \cdot (3-x_1) \cdot h_1(f) \\
f_2(x_2) &= 2x_2 + w_2 \cdot (x_2 \cdot 0.9 \cdot g_2^f(t) - x_2 h_2(t)) \\
&\quad\quad + w_2 \cdot ((3-x_2) \cdot 0.9 \cdot g_2^f(f)) \notag\\
f_3(x_1,x_3) &= 0.9w_1 \cdot (x_1g_1(t,x_3) + (3-x_1) \cdot g_1(f,x_3)) \\
f_4 &= -0.1 w_0
\end{align}

The only task we are left with is to remove the maximum operator.
For that, we describe the two phases, variable elimination and constraint generation, in two different subsections.

\subsubsection{Variable Elimination}
We first eliminate $x_2$, leading to
\begin{equation}
0 \geq \max_{x_1,x_3} \{f_1(x_1) + e_1 + f_3(x_1,x_3) + f_4 \},
\end{equation}
with
\begin{equation}
e_1 = \max_{x_2} f_2(x_2).
\end{equation}
Next, we eliminate $x_1$:
\begin{equation}
	0 \geq \max_{x_3} \{e_2(x_3) + e_1 + f_4 \},
\end{equation}
with
\begin{equation}
e_2(x_3) = \max_{x_1} f_1(x_1) + f_3(x_1,x_3).
\end{equation}
Last, we eliminate $x_3$:
\begin{equation}
	0 \geq e_1 + e_3 + f_4,
\end{equation}
with
\begin{equation}
e_3 = \max_{x_3} e_2(x_3).
\end{equation}

\subsubsection{Constraint Generation} \label{apx:lp:constraint-generation}
We generate the constraints along the elimination order of the previous subsection.
All the constraints listed in this subsection together replace the single constraint in Equation~\ref{apx:lp:constraint} for our example action of restricting nobody.
We start with the constraints for the functions $f_i$ containing $w_i$s and continue with the functions $e_i$ obtained by eliminating variables:

\paragraph{$f_1(x_1)$}
\begin{align}
u_0^{f_1} &= 3 - 3 w_1 \\
u_1^{f_1} &= 3 - 2 + w_1 - 2w_1 = 1 - w_1 \\
u_2^{f_1} &= 3 - 4 + 2w_1 - w_1 = -1 + w_1 \\
u_3^{f_1} &= 3 - 6 + 3w_1 = -3 + 3 w_1
\end{align}

\paragraph{$f_2(x_2)$}
\begin{align}
u_0^{f_2} &= w_2 \cdot (3 \cdot 0.9 \cdot 0.4) = 3.1 w_2 \\
u_1^{f_2} &= 2 + w_2 \cdot (1 \cdot 0.9 \cdot 1.8 - 1.2) + w_2 \cdot (2 \cdot 0.9 \cdot 0.4) \notag \\
	&= 2 + 0.34 w_2 \\
u_2^{f_2} &= 4 + w_2 \cdot (2 \cdot 0.9 \cdot 1.8 - 2 \cdot 2) + w_2 \cdot (1 \cdot 1.9 \cdot 0.4) \notag \\
	&= 4 - 0.04 w_2 \\
u_3^{f_2} &= 6 + w_2 \cdot (3 - 0.9 \cdot 1.8 - 3 \cdot 2) = 6 - 1.14 w_2
\end{align}

\paragraph{$f_3(x_1,x_3)$}
\begin{align}
u_{00}^{f_3} &= w_1 \cdot 3 \cdot 0.9 \cdot 0.6 = 1.62 w_1\\
u_{01}^{f_3} &= -1.62 w_1\\
u_{10}^{f_3} &= w_1 \cdot 1 \cdot 0.9 \cdot 0.2 + w_1 \cdot 2 \cdot 0.9 \cdot 0.6 = 1.26 w_1\\
u_{11}^{f_3} &= -1.26 w_1\\
u_{20}^{f_3} &= w_1 \cdot 2 \cdot 0.9 \cdot 0.2 + w_1 \cdot 1 \cdot 0.9 \cdot 0.6 = 0.9 w_1\\
u_{21}^{f_3} &= -0.9 w_1\\
u_{30}^{f_3} &= w_1 \cdot 3 \cdot 0.9 \cdot 0.2 = 0.54 w_1\\
u_{31}^{f_3} &= -0.54 w_1
\end{align}

\paragraph{$f_4$}
\begin{equation}
u^{f_4} = -0.1 w_0
\end{equation}

\paragraph{$e_1$}
\begin{align}
u^{e_1} &\geq u_0^{f_2} \\
u^{e_1} &\geq u_1^{f_2} \\
u^{e_1} &\geq u_2^{f_2} \\
u^{e_1} &\geq u_3^{f_2}
\end{align}

\paragraph{$e_2(x_3)$}
\begin{align}
u_0^{e_2} &\geq u_0^{f_1} + u_{00}^{f_3} \\
u_0^{e_2} &\geq u_1^{f_1} + u_{10}^{f_3} \\
u_0^{e_2} &\geq u_2^{f_1} + u_{20}^{f_3} \\
u_0^{e_2} &\geq u_3^{f_1} + u_{30}^{f_3} \\
u_1^{e_2} &\geq u_0^{f_1} + u_{01}^{f_3} \\
u_1^{e_2} &\geq u_1^{f_1} + u_{11}^{f_3} \\
u_1^{e_2} &\geq u_2^{f_1} + u_{21}^{f_3} \\
u_1^{e_2} &\geq u_3^{f_1} + u_{31}^{f_3}
\end{align}

\paragraph{$e_3$}
\begin{align}
u^{e_3} &\geq u_0^{e_2} \\
u^{e_3} &\geq u_1^{e_2}
\end{align}

\paragraph{Final Constraint}
In the end, we add the final constraint
\begin{equation}
0 \geq u^{e_1} + u^{e_3} + u^{f_4}
\end{equation}
due to the introduction of our helper functions $f_i$.

\subsubsection{Solving the Linear Program}
After having generated all constraints for all actions, the linear program is fed into a solver to compute a solution. The template for the linear program looks like this:
\begin{equation} \label{apx:lp_complete}
\begin{array}{ll@{}}
\text{Variables:} & w_1,w_2,w_3 \ ;\\
\text{Minimize:} & \alpha_1 w_1 + \alpha_2 w_2 + \alpha_3 + w_3 \ ;\\
\text{Subject to:} & \forall a \in A: \\
& \text{set of constraints generated like in Section~\ref{apx:lp:constraint-generation}}. \\
\end{array}
\end{equation}

\end{document}